\documentclass{article}



    \usepackage[preprint,nonatbib]{neurips_2021}



\usepackage[utf8]{inputenc} 
\usepackage[T1]{fontenc}    
\usepackage{hyperref}       
\usepackage{url}            
\usepackage{booktabs}       
\usepackage{amsfonts}       
\usepackage{nicefrac}       
\usepackage{microtype}      
\usepackage[dvipsnames]{xcolor}
\usepackage{amsmath,amssymb,amsthm}
\usepackage{physics}
\usepackage{changepage}
\usepackage{graphicx}
\usepackage{subcaption}
\usepackage[ruled,vlined]{algorithm2e}
\usepackage{wasysym}
\usepackage{mathtools}
\usepackage{wrapfig}
\usepackage{multirow}

\newtheorem{theorem}{Theorem}

\newcommand{\bbR}{\mathbb{R}}

\newcommand{\cP}{\mathcal{P}}

\newcommand{\cF}{\mathcal{F}}
\newcommand{\cWt}{\mathcal{W}_2}

\newcommand{\Ffp}{\cF_{\text{FP}}}
\usepackage{makecell}

\DeclareMathOperator*{\argmin}{arg\,min}
\DeclareMathOperator*{\argmax}{arg\,max}

\usepackage[normalem]{ulem}

\allowdisplaybreaks

\usepackage{enumitem}
\setlist{leftmargin=*,itemsep=0pt}

\title{Large-Scale Wasserstein Gradient Flows}

%

\author{%
  Petr Mokrov\thanks{Equal contribution.} \\
  Skolkovo Institute of Science and Technology\\
  Moscow Institute of Physics and Technology\\
  \textit{Moscow, Russia} \\
  \texttt{petr.mokrov@skoltech.ru} \\
  \And
  Alexander Korotin* \\
  Skolkovo Institute of Science and Technology\\
  \textit{Moscow, Russia} \\
  \texttt{a.korotin@skoltech.ru} \\
  \And
    Lingxiao Li \\
  Massachusetts Institute of Technology\\
  \textit{Cambridge, Massachusetts, USA} \\
  \texttt{lingxiao@mit.edu} \\
  \And
    Aude Genevay \\
  Massachusetts Institute of Technology\\
  \textit{Cambridge, Massachusetts, USA} \\
  \texttt{aude.genevay@gmail.com} \\
  \And
    Justin Solomon \\
  Massachusetts Institute of Technology\\
  \textit{Cambridge, Massachusetts, USA} \\
  \texttt{jsolomon@mit.edu} \\
  \And
    Evgeny Burnaev \\
  Skolkovo Institute of Science and Technology\\
  Artificial Intelligence Research Institute\\
  \textit{Moscow, Russia} \\
  \texttt{e.burnaev@skoltech.ru} \\
}

\begin{document}

\maketitle

\begin{abstract}Wasserstein gradient flows provide a powerful means of understanding and solving many diffusion equations. Specifically, Fokker-Planck equations, which model the diffusion of probability measures, can be understood as gradient descent over entropy functionals in Wasserstein space. This equivalence, introduced by Jordan, Kinderlehrer and Otto, inspired the so-called JKO scheme to approximate these diffusion processes via an implicit discretization of the gradient flow in Wasserstein space. Solving the optimization problem associated to each JKO step, however, presents serious computational challenges. We introduce a scalable method to approximate Wasserstein gradient flows, targeted to machine learning applications. Our approach relies on input-convex neural networks (ICNNs) to discretize the JKO steps, which can be optimized by stochastic gradient descent. Unlike previous work, our method does not require domain discretization or particle simulation.  As a result, we can sample from the measure at each time step of the diffusion and compute its probability density. We demonstrate our algorithm's performance by computing diffusions following the Fokker-Planck equation and apply it to unnormalized density sampling as well as nonlinear filtering.

\end{abstract}

\section{Introduction}





Stochastic differential equations (SDEs) are used to model the evolution of random diffusion processes across time, with applications in physics \cite{sobczyk2013stochastic}, finance \cite{el1997backward,platen2010numerical}, and population dynamics \cite{klim2009population}. In machine learning, diffusion processes also arise in applications filtering \cite{kalman1961new, doucet2009tutorial} and unnormalized posterior sampling via a discretization of the Langevin diffusion \cite{welling2011bayesian}. 

The time-evolving probability density $\rho_t$ of these diffusion processes is governed by the Fokker-Planck equation. Jordan, Kinderlehrer, and Otto \cite{jordan1998variational} showed that the Fokker-Planck equation is equivalent to following the gradient flow of an entropy functional in Wasserstein space, i.e., the space of probability measures with finite second order moment endowed with the Wasserstein distance. This inspired a simple minimization scheme called JKO scheme, which consists an implicit Euler discretization of the Wasserstein gradient flow. However, each step of the JKO scheme is costly as it requires solving a minimization problem involving the Wasserstein distance.



One way to compute the diffusion is to use a fixed discretization of the domain and apply standard numerical integration methods \cite{chang1970practical,pareschi2018structure,burger2010mixed,carrillo2015finite,lavenant2018dynamical} to get $\rho_{t}$.  For example, \cite{peyre2015entropic} proposes a method to approximate the diffusion based on JKO stepping and entropy-regularized optimal transport. However, these methods are limited to small dimensions since the discretization of space grows exponentially. 

An alternative to domain discretization is particle simulation. It involves drawing random samples (particles) from the initial distribution and simulating their evolution via standard methods such as Euler-Maruyama scheme \cite[\wasyparagraph 9.2]{kloeden1992}. After convergence, the particles are approximately distributed according to the stationary distribution, but no density estimate is readily available. 

Another way to avoid discretization is to parameterize the density of $\rho_t$. Most methods approximate only the first and second moments $\rho_{t}$, e.g., via Gaussian approximation. Kalman filtering approaches can then compute the dynamics \cite{kalman1961new,kushner1967approximations,julier1995new,sarkka2007unscented}. More advanced Gaussian mixture approximations \cite{terejanu2008novel,ala2015gaussian} or more general parametric families have also been studied \cite{sutter2016variational,vrettas2015variational}. In \cite{opper2019variational}, variational methods are used to minimize the divergence between the predictive and the true density.

Recently, \cite{frogner2020approximate} introduced a parametric method to compute JKO steps via entropy-regularized optimal transport. The authors regularize the Wasserstein distance in the JKO step 
to ensure strict convexity and solve the unconstrained dual problem via stochastic program on a finite linear subset of basis functions. The method yields \textit{unnormalized} probability density without direct sample access.


Recent works propose scalable continuous optimal transport solvers, parametrizing the solutions by reproducing kernels \cite{aude2016stochastic}, fully-connected neural networks \cite{seguy2017large}, or Input Convex Neural Networks (ICNNs) \cite{korotin2021wasserstein,makkuva2020optimal,korotin2021continuous}.  In particular, ICNNs gained attention for Wasserstein-2 transport since their gradients $\nabla\psi_{\theta}:\mathbb{R}^{D}\rightarrow\mathbb{R}^{D}$ can represent OT maps for the quadratic cost. These continuous solvers scale better to high dimension without discretizing the input measures, but they are too computationally expensive to be applied directly to JKO steps.

\textbf{Contributions.} We propose a scalable parametric method to approximate Wasserstein gradient flows via JKO stepping using input-convex neural networks (ICNNs) \cite{amos2017input}. Specifically, we leverage Brenier's theorem to bypass the costly computation of the Wasserstein distance, and parametrize the optimal transport map as the gradient of an ICNN. Given sample access to the initial measure $\rho_0$, we use stochastic gradient descent (SGD) to sequentially learn time-discretized JKO dynamics of $\rho_{t}$. The trained model can sample from a continuous approximation of $\rho_{t}$ and compute its density $\frac{d\rho_t}{dx}(x)$. We compute gradient flows for the Fokker-Planck free energy functional $\mathcal{F}_{\text{FP}}$ given by \eqref{fokker-plank-functional}, but our method generalizes to other cases. We demonstrate performance by computing diffusion following the Fokker-Planck equation and applying it to unnormalized density sampling as well as nonlinear filtering.

\textbf{Notation.}
$\mathcal{P}_{2}(\mathbb{R}^{D})$ denotes the set of Borel probability measures on $\mathbb{R}^{D}$ with finite second moment. $\mathcal{P}_{2,ac}(\mathbb{R}^{D})$ denotes its subset of probability measures absolutely continuous with respect to Lebesgue measure. For $\rho\in \mathcal{P}_{2,ac}(\mathbb{R}^{D})$, we denote by $\frac{d\rho}{dx}(x)$ its density with respect to the Lebesgue measure. 
 $\Pi(\mu,\nu)$ denotes the set of probability measures on $\mathbb{R}^{D} \times \mathbb{R}^{D}$ with marginals $\mu$ and $\nu$.
For measurable $T:  \mathbb{R}^{D}\rightarrow\mathbb{R}^{D}$, we denote by $T\sharp$ the associated push-forward operator between measures.

\section{Background on Wasserstein Gradient Flows}

We consider gradient flows in Wasserstein space $(\mathcal{P}_{2}(\mathbb{R}^{D}),\mathcal{W}_{2})$, the space of probability measures with finite second moment on $\mathbb{R}^{D}$ endowed with the Wasserstein-2 metric $\mathcal{W}_{2}$. 

\textbf{Wasserstein-2 distance.} The (squared) Wasserstein-2 metric $\mathcal{W}_{2}$ between $\mu, \nu \in \mathcal{P}_{2}(\mathbb{R}^{D})$ is 
\begin{equation}\mathcal{W}_{2}^{2}(\mu,\nu)\stackrel{\text{def}}{=}\min_{\pi\in\Pi(\mu,\nu)}\int_{\mathbb{R}^{D}\times \mathbb{R}^{D}}\|x-y\|^{2}_{2}\,d\pi(x,y),
\label{w2-primal-form}
\end{equation}
where the minimum is over measures $\pi$ on $\mathbb{R}^{D}\times\mathbb{R}^{D}$ with marginals $\mu$ and $\nu$ respectively \cite{villani2008optimal}.

For $\mu\in\mathcal{P}_{2,ac}(\mathbb{R}^{D})$, there exists a $\mu$-unique map  $\nabla\psi^{*}:\mathbb{R}^{D}\rightarrow\mathbb{R}^{D}$ that is the gradient of a convex function $\psi^{*}:\mathbb{R}^{D}\rightarrow\mathbb{R}\sqcup\{\infty\}$ satisfying $\nabla\psi^{*}\sharp \mu=\nu$ \cite{mccann1995existence}. From Brenier's theorem \cite{brenier1991polar}, it follows that $\pi^{*}=[\text{id}_{\mathbb{R}^{D}},\nabla\psi^{*}]\sharp\mu $ is the unique minimizer of \eqref{w2-primal-form}, i.e., $${\mathcal{W}_{2}^{2}(\mu,\nu)=\int_{\mathbb{R}^{D}}\|x-\nabla\psi^{*}(x)\|_{2}^{2}\,d\mu(x)}.$$ 

\textbf{Wasserstein Gradient Flows.}
In the Euclidean case, gradient flows along a function $f:\mathbb{R} \rightarrow \mathbb{R}$ follow the steepest descent direction and are defined through the ODE $\frac{d x_t}{dt} = -\grad f(x_t)$. Discretization of this flow leads to the gradient descent minimization algorithm. When functionals are defined over the space of measures equipped with the Wasserstein-2 metric, the equivalent flow is called the Wasserstein gradient flow. The idea is similar: the flow follows the steepest descent direction, but this time the notion of gradient is more complex. We refer the reader to \cite{ambrosio2008gradient} for exposition of gradient flows in metric spaces, or \cite[Chapter 8]{santambrogio2015optimal} for an accessible introduction. 

A curve of measures $\{\rho_t\}_{t \in \bbR_{+}}$ following the Wasserstein gradient flow of a functional $\mathcal{F}$ solves the continuity equation
\begin{gather}
    \frac{\partial \rho_t}{\partial t} = \text{div}{(\rho_t \nabla_{x}\mathcal{F}'(\rho_t))},\qquad \text{s.t. }\rho_{0} = \rho^0, 
    \label{w2-grad-flow}
\end{gather}
where $\cF'(\cdot)$ is the first variation of $\cF$ \cite[Theorem 8.3.1]{ambrosio2008gradient}. The term on the right can be understood as the gradient of $\cF$ in Wasserstein space, a vector field perturbatively rearranging the mass in $\rho_t$ to yield the steepest possible local change of $\cF$.

Wasserstein gradient flows are used in various applied tasks. For example, gradient flows are applied in training \cite{arbel2019maximum,liutkus2019sliced,gao2019deep} or refinement \cite{ansari2020refining} of implicit generative models. In reinforcement learning, gradient flows facilitate policy optimization \cite{richemond2017wasserstein,zhang2018policy}. Other tasks include crowd motion modelling \cite{maury2010macroscopic,santambrogio2010gradient,peyre2015entropic}, dataset optimization \cite{alvarez2020gradient}, and in-between animation \cite{gao2020inbetweening}.

Many applications come from the connection between Wasserstein gradient flows and SDEs. 
Consider an $\mathbb{R}^{D}$-valued stochastic process $\{X_{t}\}_{t\in\mathbb{R}_{+}}$ governed by the following It\^o SDE: 
\begin{equation}
dX_{t}=-\nabla\Phi(X_t)dt+\sqrt{2\beta^{-1}}dW_{t},\qquad\text{s.t. }X_0\sim\rho^0
\label{ito-sde}
\end{equation}
where $\Phi:\mathbb{R}^{D}\rightarrow\mathbb{R}$ is the potential function, $W_{t}$ is the standard Wiener process, and $\beta>0$ is the magnitude.
The solution of \eqref{ito-sde} is called an \textit{advection-diffusion} process.
The marginal measure $\rho_{t}$ of $X_{t}$ at each time satisfies the \textit{Fokker-Planck equation} with fixed diffusion coefficient:
\begin{equation}
     \pdv{\rho_{t}}{t} = \text{div}( \nabla \Phi(x) \rho_t ) + \beta^{-1} \Delta \rho_t,\qquad \text{s.t. }\rho_{0} = \rho^0.
     \label{fokker-plank}
\end{equation}
Equation \eqref{fokker-plank} is the Wasserstein gradient flow \eqref{w2-grad-flow} for $\mathcal{F}$ given by the Fokker-Planck free energy functional \cite{jordan1998variational}
\begin{equation}
\Ffp(\rho) = \mathcal{U}(\rho) - \beta^{-1} \mathcal{E}(\rho),
\label{fokker-plank-functional} 
\end{equation} 
where $\mathcal{U}(\rho) = \int_{\mathbb{R}^{D}} \Phi(x) d\rho(x)$ is the \textit{potential energy} and $\mathcal{E}(\rho) = -\int_{\mathbb{R}^{D}} \log \frac{d\rho}{dx}(x)d\rho(x)$ is the \textit{entropy}. As the result, to solve the SDE \eqref{ito-sde}, one may compute the Wasserstein gradient flow of the Fokker-Planck equation with the free-energy functional $\mathcal{F}_{\text{FP}}$ given by \eqref{fokker-plank-functional}.



\textbf{JKO Scheme.} Computing Wasserstein gradient flows is challenging. The closed form solution is typically unknown, necessitating numerical approximation techniques. 
Jordan, Kinderlehrer, and Otto proposed a method---later abbreviated as \textit{JKO integration}---to approximate the dynamics of $\rho_{t}$ in \eqref{w2-grad-flow} \cite{jordan1998variational}. It consists of a time-discretization update of the continuous flow given by:
\begin{gather}
    \rho^{(k)} \leftarrow \argmin\limits_{\rho \in \cP_2(\bbR^n)}\bigg[\cF(\rho)+ \frac{1}{2h} \cWt^2(\rho^{(k - 1)}, \rho)\bigg] 
    \label{jko-step}
\end{gather}
where $\rho^{(0)} = \rho^0$ is the initial condition and $h>0$ is the time-discretization step size. 
The discrete time gradient flow converges to the continuous one as $h\rightarrow 0$, i.e., $\rho^{(k)}\approx \rho_{kh}$. The method was further developed in \cite{ambrosio2008gradient,santambrogio2016}, but performing JKO iterations remains challenging thanks to the minimization with respect to $\cWt$. 


A common approach to perform JKO steps is to discretize the spatial domain. For support size $\lessapprox 10^{6}$, \eqref{jko-step} can be solved by standard optimal transport algorithms \cite{peyre2019computational}. In dimensions $D\geq 3$, discrete supports can hardly approximate continuous distributions and hence the dynamics of gradient flows. To tackle this issue,  \cite{frogner2020approximate} propose a stochastic parametric method to approximate the density of $\rho_{t}$. Their method uses entropy-regularized optimal transport (OT), which is biased.

\section{Computing Wasserstein Gradient Flows with ICNNs}

We now describe our approach to compute Wasserstein gradient flows via JKO stepping with ICNNs.

\subsection{JKO Reformulation via Optimal Push-forwards Maps}
\label{subsec:jko_via_push_forwards}


Our key idea is to replace the optimization \eqref{jko-step} over probability measures by an optimization over convex functions, an idea inspired by \cite{benamou2016discretization}. Thanks to Brenier's theorem, for any $\rho\in\mathcal{P}_{2,ac}$ there exists a unique $\rho^{(k-1)}$-measurable gradient $\nabla\psi:\mathbb{R}^{D}\rightarrow\mathbb{R}^{D}$ of a convex function $\psi$ satisfying ${\rho=\nabla\psi\sharp\rho^{(k-1)}}$. We set $\rho=\nabla\psi\sharp \rho^{(k-1)}$ and rewrite \eqref{jko-step} as an optimization over convex  $\psi$:
\begin{eqnarray}
    \psi^{(k)} \leftarrow \argmin\limits_{\text{Convex } \psi} \bigg[ \cF(\nabla\psi\sharp \rho^{(k - 1)})+\frac{1}{2h} \cWt^2(\rho^{(k - 1)}, \nabla\psi\sharp \rho^{(k - 1)})\bigg].
    \label{jko-step-function-infeasible}
\end{eqnarray}
To proceed to the next step of JKO scheme, we define $\rho^{(k)}\stackrel{\text{def}}{=}\nabla\psi^{(k)}\sharp\rho^{(k-1)}$.

Since $\rho$ is the pushforward of $\rho^{(k-1)}$ by the gradient of a convex function $\nabla\psi$, the $\mathcal{W}_{2}^{2}$ term in \eqref{jko-step-function-infeasible} can be evaluated explicitly, simplifying the Wasserstein-2 distance term in \eqref{jko-step-function-infeasible}:
\begin{eqnarray}
    \psi^{(k)} \leftarrow \argmin\limits_{\text{Convex }\psi} \bigg[ \cF(\nabla\psi\sharp \rho^{(k - 1)})+\frac{1}{2h} \int_{\mathbb{R}^{D}}\|x-\nabla\psi(x)\|_{2}^{2}d\rho^{(k-1)}(x)\bigg].
    \label{jko-step-function}
\end{eqnarray}
This formulation avoids the difficulty of computing Wasserstein-2 distances. 
An additional advantage is that we can \textit{sample} from $\rho^{(k)}$. Since $\rho^{(k)}=[\nabla\psi^{(k)}\circ\dots\circ \nabla\psi^{(1)}]\sharp\rho^{0}$, one may sample $x_{0}\sim\rho^{(0)}$, and then $\nabla\psi^{(k)}\circ\dots\circ\nabla\psi^{(1)}(x_0)$ gives a sample from $\rho^{(k)}$. Moreover, if functions $\psi^{(\cdot)}$ are strictly convex, then gradients $\nabla\psi^{(\cdot)}$ are invertible. In this case, the \textit{density} $\frac{d\rho^{(k)}}{dx}$ of $\rho^{(k)}=\nabla\psi^{(k)}\circ\dots\circ\nabla\psi^{(1)}\sharp\rho^{0}$ is computable by the change of variables formula (assuming $\psi^{(\cdot)}$ are  twice differentiable)
\begin{equation}
\frac{d\rho^{(k)}}{dx}(x_{k})=[\det \nabla^{2}\psi^{(k)}(x_{k-1})]^{-1}\cdots  [\det \nabla^{2}\psi^{(1)}(x_{0})]^{-1}\cdot \frac{d\rho^{(0)}}{dx}(x_{0}),
\label{sampled-density}
\end{equation}
where $x_{i}=\nabla\psi^{(i)}(x_{i-1})$ for $i=1,\dots,k$ and $\frac{d\rho^{(0)}}{dx}$ is the density of $\rho^{(0)}$.


\subsection{Stochastic Optimization for JKO via ICNNs}

\label{ICNN_models}



In general, the solution $\psi^{(k)}$ of \eqref{jko-step-function} is intractable since it requires optimization over all convex functions. To tackle this issue, \cite{benamou2016discretization} discretizes the space of convex function. The approach also requires discretization of measures $\rho^{(k)}$ limiting this method to small dimensions.

We propose to parametrize the search space using input convex neural networks (ICNNs) \cite{amos2017input} satisfying a universal approximation property among convex functions \cite{chen2018optimal}. ICNNs are parametric models of the form $\psi_{\theta}:\mathbb{R}^{D}\rightarrow\mathbb{R}$ with $\psi_\theta$ convex w.r.t. the input. ICNNs are constructed from neural network layers, with restrictions on the weights and activation functions to preserve the input-convexity, see \cite[\wasyparagraph 3.1]{amos2017input} or \cite[\wasyparagraph B.2]{korotin2021wasserstein}. The parameters are optimized via deep learning optimization techniques such as SGD.

The JKO step then becomes finding the optimal parameters $\theta^{*}$ for $\psi_{\theta}$:
\begin{eqnarray}
    \theta^{*} \leftarrow \argmin\limits_{\theta} \bigg[ \cF(\nabla\psi_{\theta}\sharp \rho^{(k - 1)})+\frac{1}{2h} \int_{\mathbb{R}^{D}}\|x-\nabla\psi_{\theta}(x)\|_{2}^{2}d\rho^{(k-1)}(x)\bigg].
    \label{jko-step-parametric}
\end{eqnarray}
If the functional $\mathcal{F}$ can be estimated stochastically using random batches from $\rho^{(k-1)}$, then SGD can be used to optimize $\theta$. $\mathcal{F}_{\text{FP}}$ given by \eqref{fokker-plank-functional} is an example of such a functional:

\begin{theorem}[Estimator of $\mathcal{F}_{\text{FP}}$]Let  $\rho\in\mathcal{P}_{2,ac}(\mathbb{R}^{D})$ and $T:\mathbb{R}^{D}\rightarrow\mathbb{R}^{D}$ be a diffeomorphism. 
For a random batch $x_{1},\dots,x_{N}\sim\rho$, the expression $[\widehat{\mathcal{U}_T}(x_{1},\dots,x_{N})-\beta^{-1}\widehat{\Delta\mathcal{E}_T}(x_{1},\dots,x_{N})],$
where
\begin{align*} 
\widehat{\mathcal{U}_T}(x_{1},\dots,x_{N})&\stackrel{\textup{def}}{=}\frac{1}{N}\sum_{n=1}^{N}\Phi\big(T(x_{n})\big)\textrm{ and}\\
\widehat{\Delta\mathcal{E}_T}(x_{1},\dots,x_{N})&\stackrel{\textup{def}}{=}\frac{1}{N}\sum_{n=1}^{N}\log|\det \nabla T(x_{n})|,
\end{align*}
is an estimator of $\mathcal{F}_{\text{FP}}(T\sharp\rho)$ up to constant (w.r.t.\ $T$) shift given by $\beta^{-1}\mathcal{E}(\rho)$.
\label{thm-fp-estimator}
\end{theorem}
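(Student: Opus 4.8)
The plan is to prove the stronger, pointwise-in-expectation statement: that $\mathbb{E}_{x_1,\dots,x_N\sim\rho}\big[\widehat{\mathcal{U}_T}-\beta^{-1}\widehat{\Delta\mathcal{E}_T}\big] = \mathcal{F}_{\text{FP}}(T\sharp\rho) + \beta^{-1}\mathcal{E}(\rho)$, with the second summand independent of $T$. Since both $\widehat{\mathcal{U}_T}$ and $\widehat{\Delta\mathcal{E}_T}$ are empirical averages of i.i.d.\ terms, it suffices to match the expectation of a single term, and I would handle the potential and entropy pieces separately. Throughout, write $\nu \defeq T\sharp\rho$; because $T$ is a diffeomorphism, $\nu\in\mathcal{P}_{2,ac}(\mathbb{R}^D)$ and its density is given by the standard change-of-variables formula $\frac{d\nu}{dy}(y)=\frac{d\rho}{dx}\big(T^{-1}(y)\big)\,\big|\det\nabla T(T^{-1}(y))\big|^{-1}$.

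For the potential term, the definition of the push-forward gives $\mathbb{E}_{x\sim\rho}[\Phi(T(x))]=\int_{\mathbb{R}^D}\Phi(y)\,d\nu(y)=\mathcal{U}(\nu)$, so $\widehat{\mathcal{U}_T}$ is already an unbiased estimator of $\mathcal{U}(T\sharp\rho)$ with no correction needed. For the entropy term, I would substitute the density formula above into $\mathcal{E}(\nu)=-\int\log\frac{d\nu}{dy}(y)\,d\nu(y)$ to get $\mathcal{E}(\nu)=-\int\big[\log\frac{d\rho}{dx}(T^{-1}(y))-\log|\det\nabla T(T^{-1}(y))|\big]d\nu(y)$, then change variables back along $y=T(x)$ (under which $d\nu(y)$ corresponds to $d\rho(x)$) to obtain $\mathcal{E}(\nu)=\mathcal{E}(\rho)+\int_{\mathbb{R}^D}\log|\det\nabla T(x)|\,d\rho(x)$. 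This identifies $\mathbb{E}_{x\sim\rho}\big[\log|\det\nabla T(x)|\big]=\mathcal{E}(T\sharp\rho)-\mathcal{E}(\rho)$, i.e.\ $\widehat{\Delta\mathcal{E}_T}$ is unbiased for this difference.

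Combining the two pieces, $\mathbb{E}\big[\widehat{\mathcal{U}_T}-\beta^{-1}\widehat{\Delta\mathcal{E}_T}\big]=\mathcal{U}(\nu)-\beta^{-1}\big(\mathcal{E}(\nu)-\mathcal{E}(\rho)\big)=\big(\mathcal{U}(\nu)-\beta^{-1}\mathcal{E}(\nu)\big)+\beta^{-1}\mathcal{E}(\rho)=\mathcal{F}_{\text{FP}}(T\sharp\rho)+\beta^{-1}\mathcal{E}(\rho)$, and since $\rho=\rho^{(k-1)}$ is fixed when optimizing over $T=\nabla\psi_\theta$, the shift $\beta^{-1}\mathcal{E}(\rho)$ is constant in $T$, which is exactly the claim.

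This is essentially a bookkeeping argument rather than one with a genuine obstacle; the only points requiring care are measure-theoretic. First, one must justify the push-forward density formula, which is precisely where the diffeomorphism hypothesis on $T$ is used (so $T^{-1}$ and $\nabla T$ are well defined and $\nabla T$ is nonsingular). Second, the additive splitting $\mathcal{E}(\nu)=\mathcal{E}(\rho)+\int\log|\det\nabla T|\,d\rho$ should be a genuine identity and not an ill-defined $\infty-\infty$; I would either assume $\mathcal{E}(\rho)$ and $\int\log|\det\nabla T|\,d\rho$ are finite, or simply read the whole statement "up to the constant shift $\beta^{-1}\mathcal{E}(\rho)$" as absorbing any such (fixed, $T$-independent) quantity. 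No analytic difficulty beyond this arises.
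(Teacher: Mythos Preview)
Your proposal is correct and follows essentially the same route as the paper: handle the potential term by the push-forward definition, derive the entropy identity $\mathcal{E}(T\sharp\rho)=\mathcal{E}(\rho)+\int\log|\det\nabla T|\,d\rho$ via the change-of-variables density formula for the diffeomorphism $T$, and combine. Your additional remarks about the finiteness of $\mathcal{E}(\rho)$ and $\int\log|\det\nabla T|\,d\rho$ are a nice bit of extra care that the paper does not make explicit.
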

\begin{proof}$\widehat{\mathcal{U}_T}$ is a straightforward unbiased estimator for $\mathcal{U}(T\sharp\rho)$.
Let $p$ and $p_{T}$ be the densities of $\rho$ and $T\sharp\rho$. Since $T$ is a diffeomorphism, we have $p_{T}(y)=p(x)\cdot |\det \nabla T(x)|^{-1}$ where $x=T^{-1}(y)$. 
Using the change of variables formula, we write
\begin{align*}
\mathcal{E}(T\sharp\rho)&=-\int_{\mathbb{R}^{D}}p_{T}(y)\log p_{T}(y)dy
\nonumber
\\
&=-\int_{\mathbb{R}^{D}}p(x)\cdot |\det \nabla T(x)|^{-1}\log\bigg[p(x)\cdot |\det \nabla T(x)|^{-1}\bigg]\cdot |\det \nabla T(x)|dx
\\
&=-\int_{\mathbb{R}^{D}}p(x)\log p(x)dx+\int_{\mathbb{R}^{D}}p(x)\log|\det\nabla T(x)|dx
\\
&=\mathcal{E}(\rho)+\int_{\mathbb{R}^{D}}p(x)\log|\det\nabla T(x)|dx,\\
\implies\Delta\mathcal{E}_{T}(\rho)&\stackrel{\text{def}}{=}\mathcal{E}(T\sharp\rho)-\mathcal{E}(\rho)=\int_{\mathbb{R}^{D}}\log|\det\nabla T(x)|d\rho(x)
\end{align*}
which explains that $\widehat{\Delta\mathcal{E}_T}$ is an unbiased estimator of $\Delta\mathcal{E}_{T}(\rho)$. As the result, $\widehat{\mathcal{U}_T}-\beta^{-1}\widehat{\Delta\mathcal{E}_{T}}$ is an estimator for $\mathcal{F}_{\text{FP}}(T\sharp\rho)=\mathcal{U}(T\sharp\rho)-\beta^{-1}\mathcal{E}(T\sharp\rho)$ up to a shift of $\beta^{-1}\mathcal{E}(\rho)$.
\end{proof}
To apply Theorem \ref{thm-fp-estimator} to our case, we take $T\leftarrow\nabla\psi_{\theta}$ and $\rho\leftarrow\rho^{(k-1)}$ to obtain a stochastic estimator for $\mathcal{F}_{\text{FP}}(\nabla\psi_{\theta}\sharp \rho^{(k-1)})$ in \eqref{jko-step-parametric}. Here, $\beta^{-1}\mathcal{E}(\rho^{(k-1)})$ is $\theta$-independent and constant since $\rho^{(k-1)}$ is fixed, so the offset of the estimator plays no role in the optimization w.r.t.\ $\theta$.

Algorithm \ref{algorithm-main} details our stochastic JKO method for $\mathcal{F}_{\text{FP}}$. The training is done solely based on random samples from the initial measure $\rho^{0}$: its density is not needed.
\begin{algorithm}[h!]
    {
        \SetAlgorithmName{Algorithm}{empty}{Empty}
        \SetKwInOut{Input}{Input}
        \SetKwInOut{Output}{Output}
        \Input{Initial measure $\rho^0$ accessible by samples;\\
        JKO discretization step $h > 0$, number of JKO steps $K> 0$;\\ target potential $\Phi(x)$, diffusion process temperature $\beta^{-1}$;\\
        batch size $N$\;
        }
        \Output{trained ICNN models $\{\psi^{(k)}\}_{k = 1}^{K}$ representing JKO steps}
        
        \For{$k = 1,2, \dots, K$}{
        $\psi_{\theta}\leftarrow$ basic ICNN model\;
            \For{$i = 1, 2, \dots$}{
                Sample batch $Z \sim \rho^0$ of size N\;
                $X \leftarrow \grad \psi^{(k - 1)} \circ \dots \circ \grad \psi^{(1)}(Z)$\;
                
                $\widehat{\cWt^2} \leftarrow \frac{1}{N}\sum\limits_{x \in X}\Vert \grad \psi_{\theta}(x) - x \Vert_2^2$\;
                $\widehat{\mathcal{U}}\leftarrow \frac{1}{N} \sum\limits_{x \in X} \Phi\big(\grad \psi_{\theta}(x)\big)$\;
                $\widehat{\Delta \mathcal{E}} \leftarrow  \frac{1}{N}\sum\limits_{x \in X} \log\det \laplacian \psi_{\theta}(x)$\;
                
                $\widehat{\mathcal{L}} \leftarrow \frac{1}{2h} \widehat{\cWt^2} + \widehat{\mathcal{U}} - \beta^{-1} \widehat{\Delta \mathcal{E}}$\;
                
                Perform a gradient step over $\theta$ by using $\frac{\partial\widehat{\mathcal{L}}}{\partial \theta}$\;
                
            }
            $\psi^{(k)} \leftarrow \psi_{\theta}$
        }
        
        \caption{Fokker-Planck JKO via ICNNs}
        \label{algorithm-main}
    }
\end{algorithm}



This algorithm assumes $\mathcal{F}$ is the Fokker-Planck diffusion energy functional. However, our method admits straightforward generalization to any $\mathcal{F}$ that can be stochastically estimated; studying such functionals is a promising avenue for future work.

\subsection{Computing the Density of the Diffusion Process}
\label{subsec:ICNN_pdf_computation}


Our algorithm provides a \textit{computable density} for $\rho^{(k)}$. As discussed in \wasyparagraph\ref{subsec:jko_via_push_forwards}, it is possible to sample from $\rho^{(k)}$ while simultaneously computing the density of the samples. However, this approach does not provide a direct way to evaluate $\frac{d\rho^{(k)}}{dx}(x_{k})$ for arbitrary $x_{k}\in\mathbb{R}^{D}$. We resolve this issue below.

If a convex function is strongly convex, then its gradient is bijective on $\mathbb{R}^{D}$. By the change of variables formula for $x_{k}\in\mathbb{R}^{D}$, it holds ${\frac{d\rho^{(k)}}{dx}(x_k)=\frac{d\rho^{(k-1)}}{dx}(x_{k-1})\cdot[\det \nabla^{2}\psi^{(k)}(x_{k-1})]^{-1}}$ where ${x_{k}=\nabla\psi^{(k)}(x_{k-1})}$. To compute $x_{k-1}$, one needs to solve the convex optimization problem:
\begin{equation}x_{k}=\nabla\psi^{(k)}(x_{k-1})\qquad \Longleftrightarrow\qquad x_{k-1}=\argmax_{x\in\mathbb{R}^{D}}\big[\langle x,x_k\rangle-\psi^{(k)}(x)\big].
\label{inversion-convex-problem}
\end{equation}
If we know the density of $\rho^{0}$, to compute the density of $\rho^{(k)}$ at $x_{k}$ we solve $k$ convex problems 
$$x_{k-1}=\argmax_{x\in\mathbb{R}^{D}}\big[\langle x,x_k\rangle-\psi^{(k)}(x)\big]\qquad\dots\qquad x_{0}=\argmax_{x\in\mathbb{R}^{D}}\big[\langle x,x_1\rangle-\psi^{(1)}(x)\big]$$
to obtain $x_{k-1},\dots,x_{0}$ and then evaluate the density as $$\frac{d\rho_{k}}{dx}(x_{k})=\frac{d\rho^{0}}{dx}(x_{0})\cdot\big[\prod_{i=1}^{k} \det \nabla^{2}\psi^{(i)}(x_{i-1})\big]^{-1}.$$
Note the steps above provide a general method for tracing back the position of a particle along the flow, and density computation is simply a byproduct.

\section{Experiments} 
\label{sec-experiments}

In this section, we evaluate our method on toy and real-world applications. Our code is written in PyTorch and is publicly available at
\begin{center}
    \url{https://github.com/PetrMokrov/Large-Scale-Wasserstein-Gradient-Flows}
\end{center}
The experiments are conducted on a GTX 1080Ti. In most cases, we performed several random restarts to obtain mean and variation of the considered metric. As the result, experiments require about 100-150 hours of computation. The 
details are given in Appendix \ref{sec-exp-details}.


\textbf{Neural network architectures.} In all experiments, we use the DenseICNN \cite[Appendix B.2]{korotin2021wasserstein} architecture for $\psi_{\theta}$ in Algorithm \ref{algorithm-main} with \textit{SoftPlus} activations. The network $\psi_{\theta}$ is twice differentiable w.r.t.\ the input $x$ and has bijective gradient $\nabla\psi_{\theta}:\mathbb{R}^{D}\rightarrow\mathbb{R}^{D}$ with positive semi-definite Hessian $\nabla^{2}\psi_{\theta}(x)\succeq 0$ at each $x$. We use automatic differentiation to compute $\nabla\psi_{\theta}$ and $\nabla^{2}\psi_{\theta}$.

\textbf{Metric.} To qualitatively compare measures, we use the symmetric Kullback-Leibler divergence
\begin{equation}
    \text{SymKL}(\rho_{1},\rho_{2})\stackrel{\text{def}}{=}\text{KL}(\rho_{1}\lVert\rho_{2})+\text{KL}(\rho_{2}\lVert\rho_{1}),\label{sym_kl}
\end{equation}
where $\text{KL}(\rho_{1}\lVert\rho_{2})\stackrel{\text{def}}{=}\int_{\mathbb{R}^{D}}\log\frac{d\rho_{1}}{d\rho_{2}}(x)d\rho_{1}(x)$ is the Kullback-Leibler divergence. For particle-based methods, we obtain an approximation of the distribution by kernel density estimation.

\subsection{Convergence to Stationary Solution}
\label{sec-convergence-stationary}

Starting from an arbitrary initial measure $\rho^0$, an advection-diffusion process \eqref{fokker-plank} converges to the unique stationary solution $\rho^{*}$ \cite{risken} with density
\begin{equation}
    \frac{d\rho^{*}}{dx}(x) = Z^{-1} \exp(-\beta \Phi(x)),
    \label{stationary-distribution}
\end{equation}
where $Z=\int_{\mathbb{R}^{D}}\exp(-\beta \Phi(x))dx$ is the normalization constant. 
This property makes it possible to compute the symmetric KL between the distribution to which our method converges and the ground truth, provided $Z$ is known.

\begin{wrapfigure}{r}{0.45\textwidth}
  \vspace{-0.36in}
  \begin{center}
    \includegraphics[width=0.44\textwidth]{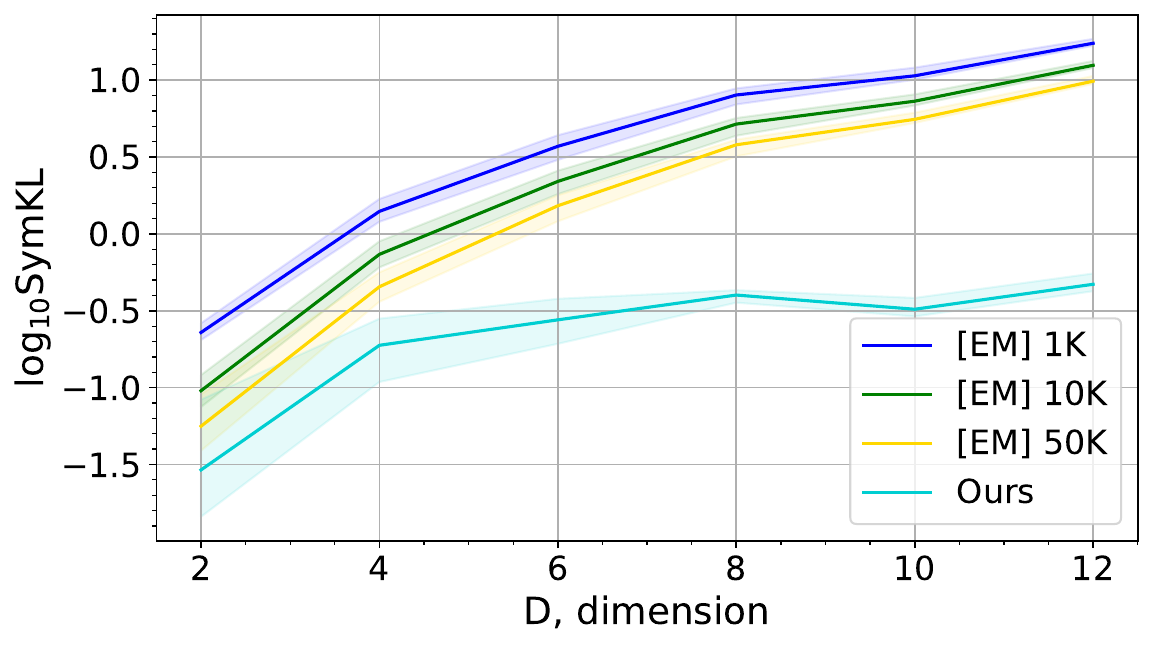}
  \end{center}
  \vspace{-3mm}
  \caption{SymKL between the computed and the stationary measure in ${D = 2, 4, \dots 12}$}
  \label{fig:conv-stationary}
  \vspace{-0.5in}
\end{wrapfigure}

We use $\mathcal{N}(0, 16 I_{D})$ as the initial measure $\rho^{0}$ and a random Gaussian mixture as the stationary measure $\rho^{*}$. In our method, we perform $K=40$ JKO steps with step size $h=0.1$. We compare with a particle simulation method (with $10^{3},10^{4},10^{5}$ particles) based on the Euler-Maruyama $\lfloor$EM$\rceil$ approximation \cite[\wasyparagraph 9.2]{kloeden1992}. We repeat the experiment $5$ times and report the averaged results in Figure \ref{fig:conv-stationary}.

In Figure \ref{fig:stat-sol-conv}, we present qualitative results of our method converging to the ground truth in $D=13,32$.

\begin{figure}[!h]
     \centering
     \begin{subfigure}[b]{0.48\textwidth}
         \centering
         \includegraphics[width=\linewidth]{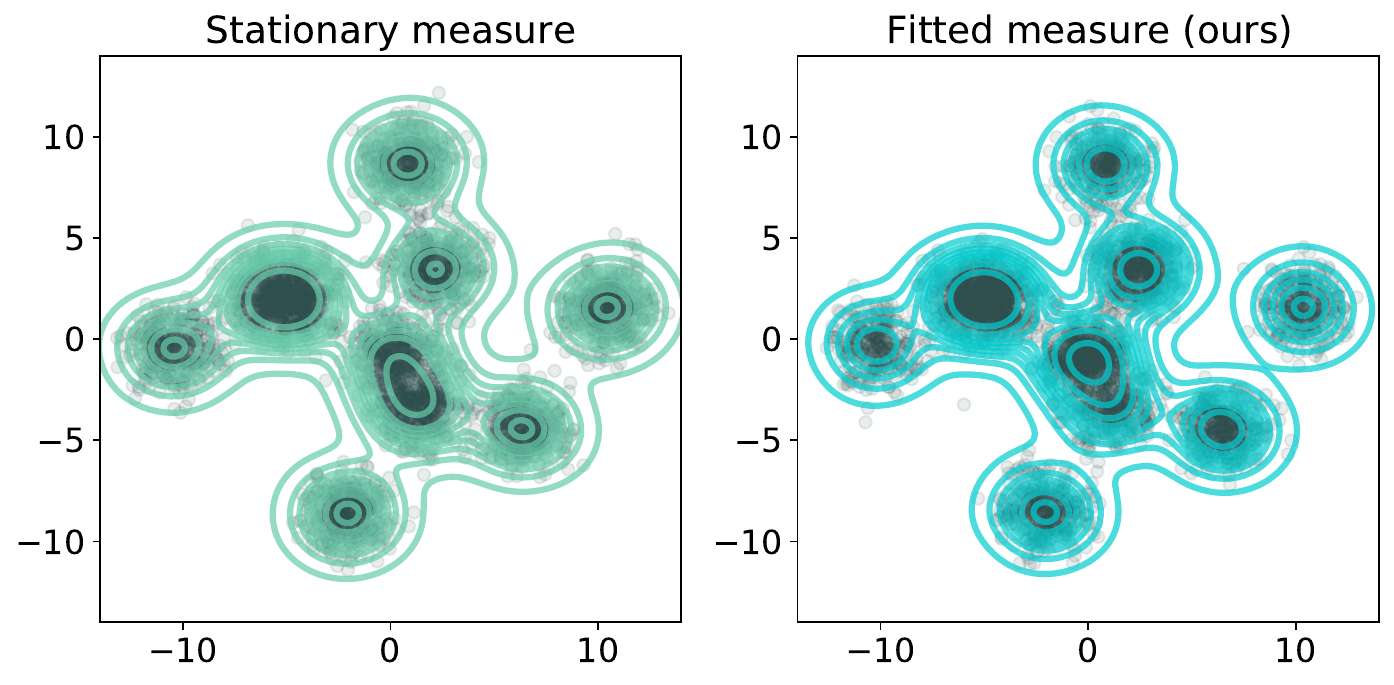}
         \caption{Dimension $D=13$}
     \end{subfigure}
     \hspace{2mm}\begin{subfigure}[b]{0.48\textwidth}
        \centering
        \includegraphics[width=\linewidth]{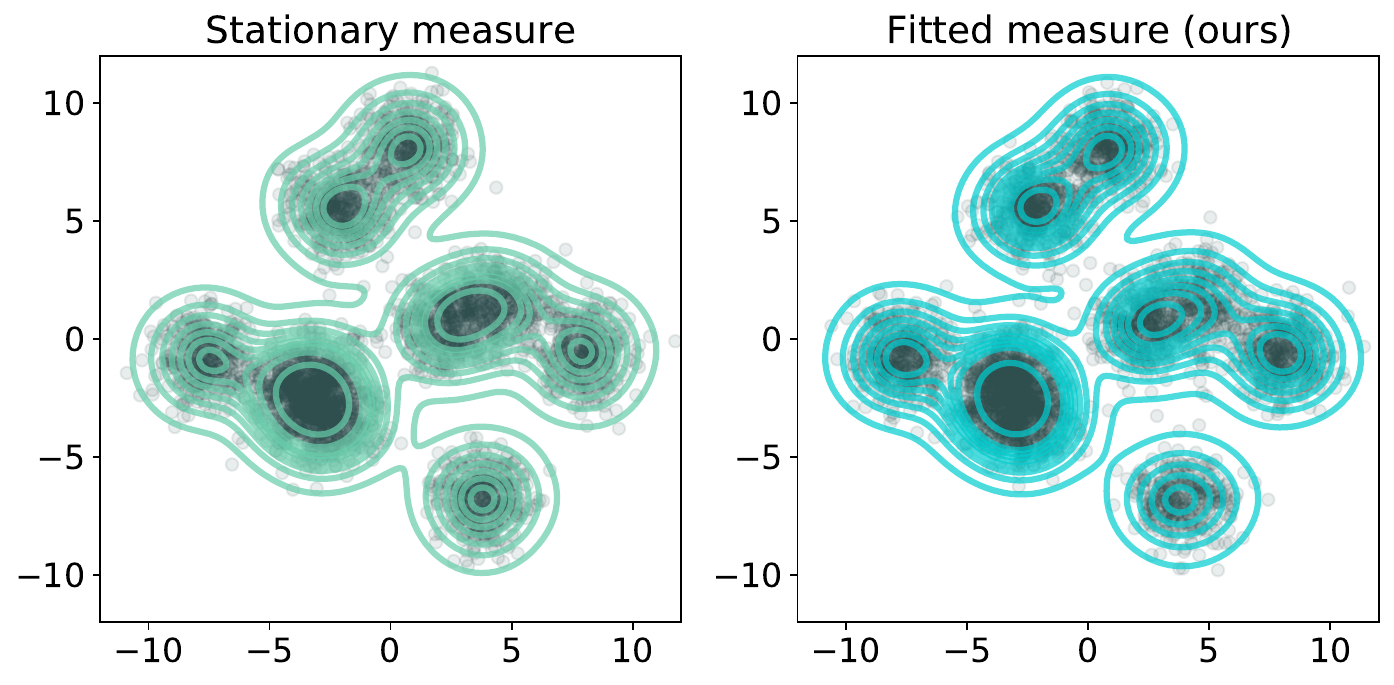}
         \caption{Dimension $D=32$}
     \end{subfigure}
     \caption{Projections to 2 first PCA components of the true stationary measure and the measure approximated by our method in dimensions $D=13$ (on the left) and $D=32$ (on the right).}
     \label{fig:stat-sol-conv}
\end{figure}

\subsection{Modeling Ornstein-Uhlenbeck Processes}
\label{sec-ou-process}

Ornstein-Uhlenbeck processes are advection-diffusion processes \eqref{fokker-plank} with $\Phi(x) = \frac{1}{2} (x - b)^T A (x-b)$ for symmetric positive definite $A\in\mathbb{R}^{D\times D}$ and $b\in\mathbb{R}^{D}$. They are among the few examples where we know $\rho_{t}$ for any $t\in\mathbb{R}^{+}$ in closed form, when the initial measure $\rho^{0}$ is Gaussian \cite{vatiwutipong2019alternative}. This allows to quantitatively evaluate the computed dynamics of the process, not just the stationary measure.

 We choose $A,b$ at random and set $\rho^{0}$ to be the standard Gaussian measure $\mathcal{N}(0,I_{D})$. We approximate the dynamics of the process by our method with JKO step $h=0.05$ and compute SymKL between the true $\rho_{t}$ and the approximate one at time $t=0.5$ and $t=0.9$. We repeat the experiment $15$ times in dimensions $D=1,2\dots,12$ and report the performance at in Figure \ref{fig:ou-simulation}. The baselines are $\lfloor$EM$\rceil$ with $10^3,10^4, 5\times 10^4$ particles, EM particle simulation endowed with the Proximal Recursion operator $\lfloor$EM PR$\rceil$ with $10^4$ particles \cite{CaluyaEMPR}, and the parametric dual inference method \cite{frogner2020approximate} for JKO steps $\lfloor$Dual JKO$\rceil$. The detailed comparison for times $t = 0.1, 0.2, \dots 1$ is given in Appendix \ref{appendix:add_exp}. 
\begin{figure}[!h]
     \centering
     \begin{subfigure}[b]{0.48\textwidth}
         \centering
         \includegraphics[width=\linewidth]{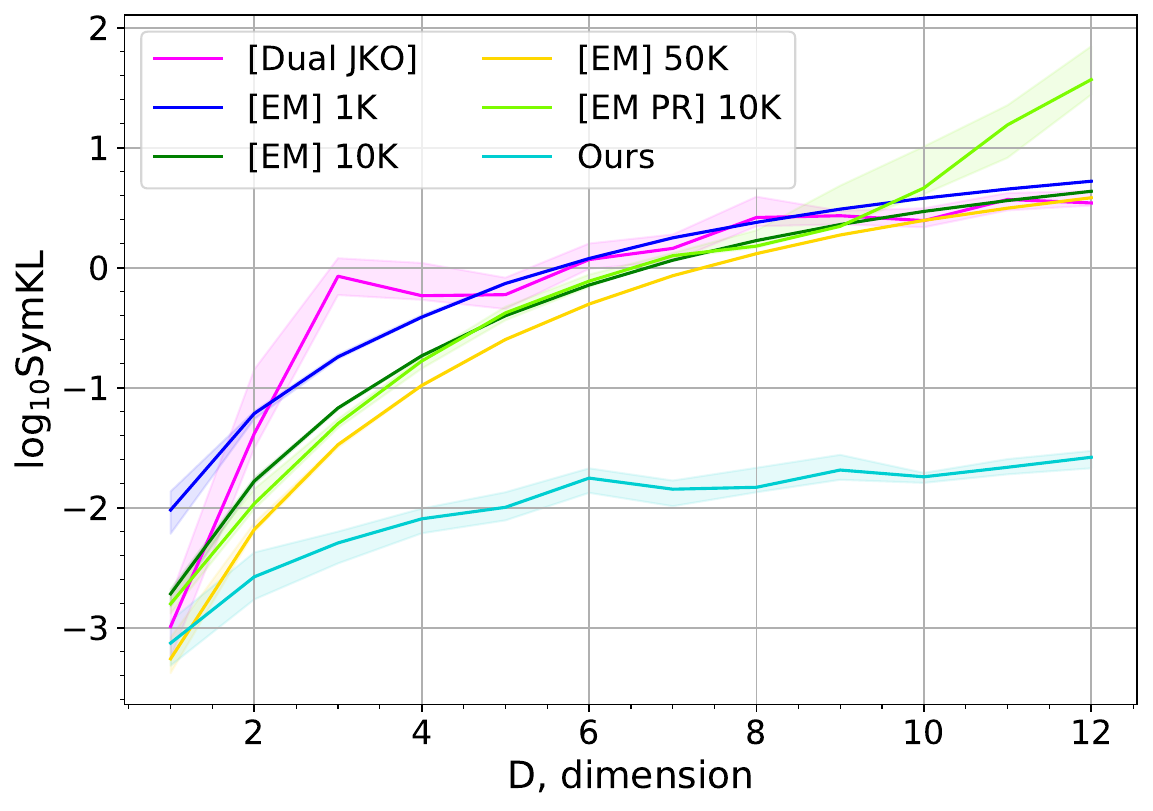}
         \caption{Time $t=0.5$}
     \end{subfigure}
     \hspace{2mm}\begin{subfigure}[b]{0.48\textwidth}
        \centering
        \includegraphics[width=\linewidth]{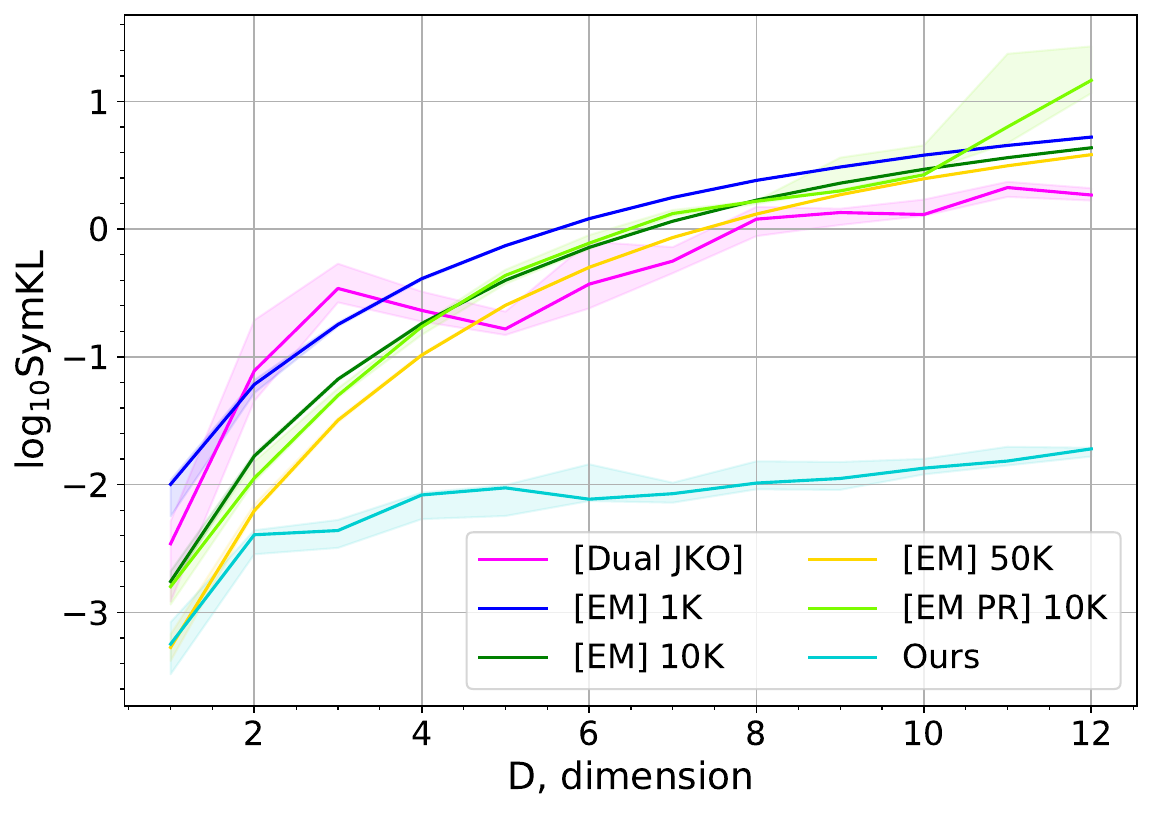}
         \caption{Time $t=0.9$}
     \end{subfigure}

     \caption{SymKL values between the computed measure and the true measure $\rho_{t}$ at $t=0.5$ (on the left) and $t=0.9$ (on the right) in dimensions $D=1,2,\dots,12$. Best viewed in color.} \vspace{-.2in}
     \label{fig:ou-simulation}
\end{figure}

\subsection{Unnormalized Posterior Sampling in Bayesian Logistic Regression}
\label{sec-unnormalized-posterior}
An important task in Bayesian machine learning to which our algorithm can be applied is sampling from an unnormalized posterior distribution. Given the model parameters $x\in\mathbb{R}^{D}$ with the prior distribution $p_0(x)$  as well as the conditional density $p(\mathcal{S}|x)=\prod_{m=1}^{M}p(s_{m}|x)$ of the data $\mathcal{S}=\{s_{1},\dots,s_{M}\}$, the posterior distribution is given by
$$p(x|\mathcal{S})=\frac{p(\mathcal{S}|x)p_0(x)}{p(\mathcal{S})}\propto p(\mathcal{S}|x)p_0(x)=p_0(x)\cdot\prod_{m=1}^{M}p(s_{m}|x).$$
Computing the normalization constant $p(\mathcal{S})$ is in general intractable, underscoring the need for estimation methods that sample from $p(\mathcal{S}|x)$ given the density only up to a normalizing constant.

\begin{wraptable}{r}{7cm}
\small
  \centering
  \begin{tabular}{ccccc}
    \hline
    \multirow{2}{*}{Dataset} &
      \multicolumn{2}{c}{Accuracy} &
      \multicolumn{2}{c}{Log-Likelihood} \\ 
      & {Ours} & $\lceil\text{SVGD}\rfloor$ & {Ours} & $\lceil\text{SVGD}\rfloor$ \\
      \hline
    covtype & 0.75 & 0.75 & -0.515 & -0.515 \\
    german & 0.67 & 0.65 & -0.6 & -0.6  \\
    diabetis & 0.775 & 0.78 & -0.45 & -0.46 \\
    twonorm & 0.98 & 0.98 & -0.059 & -0.062 \\
    ringnorm & 0.74 & 0.74 & -0.5 & -0.5 \\
    banana & 0.55 & 0.54 & -0.69 & -0.69 \\
    splice & 0.845 & 0.85 & -0.36 & -0.355 \\
    waveform & 0.78 & 0.765 & -0.485 & -0.465 \\
    image & 0.82 & 0.815 & -0.43 & -0.44 \\
    \hline
  \end{tabular}
\caption{Comparison of our method with $\lceil\text{SVGD}\rfloor$ \cite{liu2016stein} for Bayesian logistic regression.}
  \label{table-posterior-inference}
  \vspace{-7mm}
\end{wraptable} 

In our context, sampling from $p(x|\mathcal{S})$ can be solved similarly to the task in \wasyparagraph\ref{sec-convergence-stationary}. From \eqref{stationary-distribution}, it follows that the advection-diffusion process with temperature $\beta>0$ and $\Phi(x)=-\frac{1}{\beta}\log \big[p_0(x)\cdot p(\mathcal{S}|x)\big]$ has $\frac{d\rho^{*}}{dx}(x)=p(x|\mathcal{S})$ as the stationary distribution. Thus, we can use our method to approximate the diffusion process and obtain a sampler for $p(x|\mathcal{S})$ as a result.

 The potential energy $\mathcal{U}(\rho) = \int_{\mathbb{R}^{D}}\Phi(x) d\rho(x)$ can be estimated efficiently by using a trick similar to the ones in stochastic gradient Langevin dynamics \cite{welling2011bayesian}, which consists in resampling samples in $\mathcal{S}$ uniformly.
For evaluation, we consider the Bayesian linear regression setup of \cite{liu2016stein}. We use the 8 datasets from \cite{mika1999fisher}. The number of features ranges from 2 to 60 and the dataset size from 700 to 7400 data points. We also use the Covertype dataset\footnote{\url{https://www.csie.ntu.edu.tw/~cjlin/libsvmtools/datasets/binary.html}} with 500K data points and 54 features. The prior on regression weights $w$ is given by $p_{0}(w|\alpha)=\mathcal{N}(w|0,\alpha^{-1})$ with $p_{0}(\alpha)=\text{Gamma}(\alpha|1,0.01)$, so the prior on parameters $x=[w,\alpha]$ of the model is given by $p_0(x)=p_0(w,\alpha)=p_{0}(w|\alpha)\cdot p_{0}(\alpha)$. We randomly split each dataset into train $\mathcal{S}_{\text{train}}$ and test $\mathcal{S}_{\text{test}}$ ones with ratio 4:1 and apply the inference on the posterior $p(x|\mathcal{S}_{\text{train}})$. In Table \ref{table-posterior-inference}, we report accuracy and log-likelihood of the predictive distribution on $\mathcal{S}_{\text{test}}$. As the baseline, we use particle-based Stein Variational Gradient Descent \cite{liu2016stein}. We use the author's implementation with the default hyper-parameters.




\subsection{Nonlinear Filtering}
\label{subsec:nonlin_filtering}
We demonstrate the application of our method to filtering a nonlinear diffusion. In this task, we consider a diffusion process $X_t$ governed by the Fokker-Planck equation \eqref{fokker-plank}. At times $t_1<t_2<\dots<t_K$ we obtain noisy observations of the process 
    $Y_k = X_{t_k} + v_k$ with $v_k \sim \mathcal{N}(0, \sigma).$ 
The goal is to compute the predictive distribution $p_{t,X}(x| Y_{1:K})$ for $t \geq t_K$ given observations ${Y_{1:K}}$.

For each $k$ and ${t\geq t_k}$ predictive distribution $p_{t,X}(x|Y_{1:k})$ follows the diffusion process on time interval $[t_k, t]$ with initial distribution $p_{t_{k},X}(x|Y_{1:k})$. If $t_k = t$ then
\begin{eqnarray}
    p_{t_k,X}(x|Y_{1:k}) \propto 
    p(Y_k|X_{t_k}=x)\cdot p_{t_{k},X}(x| Y_{1:k-1}).
    \label{nonlin_filt_marg_post}
\end{eqnarray}

For $k=1,\dots,K$, we sequentially obtain the predictive distribution $p_{t_k,X}(x|Y_{1:k})$ by using the previous predictive distribution $p_{t_{k-1},X}(x|Y_{1:k-1})$. First, given access to $p_{t_{k-1},X}(x|Y_{1:k-1})$, we approximate the diffusion on interval $[t_{k-1},t_{k}]$ with initial distribution $p_{t_{k-1},X}(x|Y_{1:k-1})$ by our Algorithm \ref{algorithm-main} to get access to $p_{t_{k},X}(x|Y_{1:k-1})$. Next, we use \eqref{nonlin_filt_marg_post} to get unnormalized density and Metropolis-Hastings algorithm \cite{robert1999metropolis} to sample from $p_{t_k,X}(x|Y_{1:k})$. We give details in Appendix \ref{appendix:mcmc}.

For evaluation, we consider the experimental setup of \cite[\wasyparagraph 6.3]{frogner2020approximate}. We assume that the $1$-dimensional diffusion process $X_{t}$ has potential function $\Phi(x) = \frac{1}{\pi} \sin(2 \pi x) + \frac{1}{4} x^2$ which makes the process highly nonlinear. We simulate nonlinear filtering on the time interval $t_{\text{start}} = 0 \text{ sec.}$, $t_{\text{fin}} = 5 \text{ sec.}$ and take the noise observations each $0.5 \text{ sec}$. The noise variance is $\sigma^{2}=1$ and $p(X_{0})=\mathcal{N}(X_{0}|0,1)$.

We predict the conditional density $p_{t_{\text{final}},X}(x| Y_{1:9})$ and compare the prediction with ground truth obtained with numerical integration method by Chang and Cooper \cite{CHANG19701}, who use a fine discrete grid. As the baselines, we use $\lfloor$Dual JKO$\rceil$ \cite{frogner2020approximate} as well as the Bayesian Bootstrap filter $\lfloor$BBF$\rceil$ \cite{bayesian_bootstrap_filter}, which combines particle simulation with bootstrap resampling at observation times.

We repeat the experiment $15$ times. In Figure \ref{fig:symkl-nonlinear}, we report the SymKL between predicted density and true $p(X_{t_{\text{fin}}}|Y_{1:9})$. We visually compare the fitted and true conditional distributions in Figure  \ref{fig:vis-nonlinear}. 
\begin{figure}[!h]
  \begin{subfigure}[b]{0.49\textwidth}
    \includegraphics[width=\textwidth]{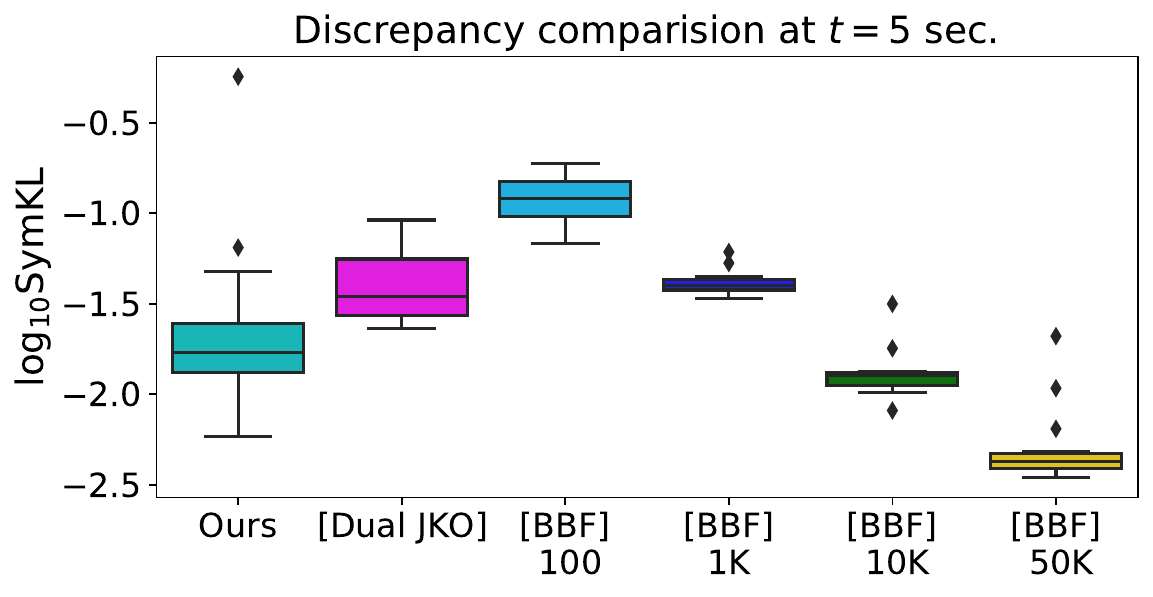}
    \caption{SymKL values.}
    \label{fig:symkl-nonlinear}
  \end{subfigure}
   \hfill
  \begin{subfigure}[b]{0.50\textwidth}
    \raisebox{2mm}{\includegraphics[width=\textwidth]{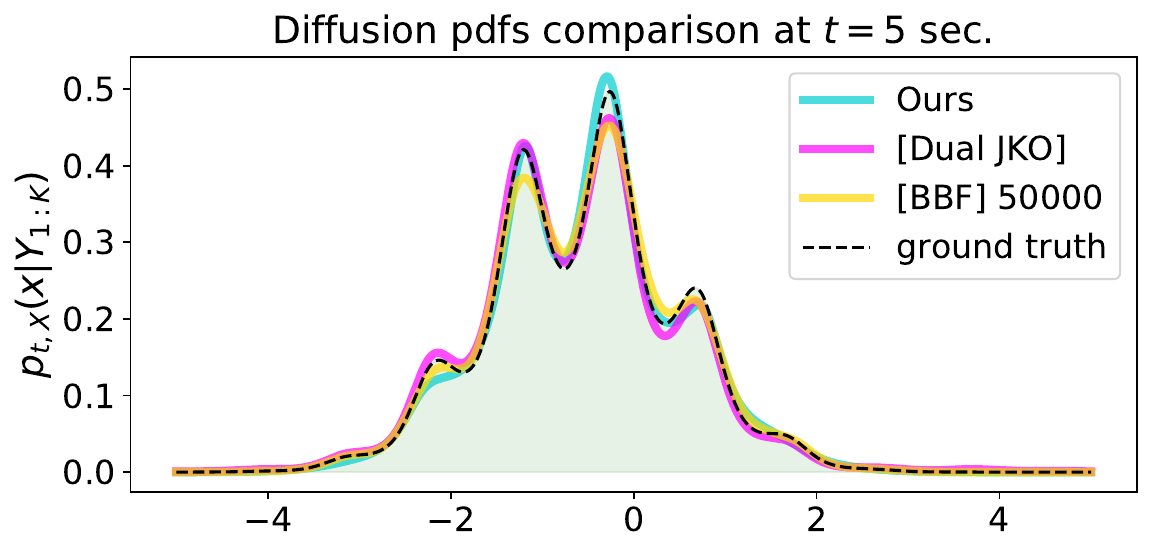}}
    \caption{Visualized probability density functions.}
    \label{fig:vis-nonlinear}
  \end{subfigure}
  \caption{Comparison of the predicted conditional density and true $p(X_{t_{\text{fin}}}|Y_{1:9})$.}\vspace{-.2in}
\end{figure}

\section{Discussion}
\label{sec-discussion}

\textbf{Complexity of training and sampling.} Let $T$ be the number of operations required to evaluate ICNN $\psi_{\theta}(x)$, and assume that the evaluation of $\Phi(x)$ in the potential energy $\mathcal{U}$ takes $O(1)$ time.

\begin{wraptable}{r}{7cm}
\vspace{-0.02in}\begin{tabular}{ll}
\toprule
Operation & Time Complexity \\
\midrule
Eval. $\psi_{\theta}$,$\nabla\psi_{\theta}$,$\nabla^{2}\psi_{\theta}$ & $T$, $O(T)$, $O(DT)$  \\
Eval. $\log\det\nabla^{2}\psi_{\theta}$ & $O(DT\!+\!D^{3})$  \\
Sample $x\sim \rho^{(k)}$ & $O\big((k\!-\!1)T\big)$ \\
Eval. $\widehat{\mathcal{L}}$ on $x\sim \rho^{(k)}$ & $O(DT+D^{3})$ \\
Eval. $\frac{\partial\widehat{\mathcal{L}}}{\partial\theta}$ on $x\sim \rho^{(k)}$ & $O(DT\!+\!D^{3})$ \\
\midrule
\makecell{Sample $x\sim \rho^{(k)}$ and\\Eval. $\frac{d\rho^{(k)}}{dx}(x)$} & $O\big((k\!-\!1)(TD\!+\!D^{3})\big)$ \\
\bottomrule
\end{tabular}
\caption{Complexity of operations in our method for computing JKO steps via ICNNs.}
\vspace{-0.2in}
\end{wraptable}
Recall that computing the gradient is a small constant factor harder than computing the function itself \cite{linnainmaa1970representation}. Thus, evaluation of  $\nabla\psi_{\theta}(x):\mathbb{R}^{D}\rightarrow\mathbb{R}^{D}$ requires $O(T)$ operations and evaluating the Hessian ${\nabla^{2}\psi_{\theta}(x):\mathbb{R}^{D}\rightarrow\mathbb{R}^{D\times D}}$ takes $O(DT)$ time. To compute $\log\det\nabla^{2}\psi_{\theta}(x)$, we need $O(D^{3})$ extra operations. Sampling from $\rho^{(k-1)}=\nabla\psi^{(k-1)}\circ\dots\circ\nabla\psi^{(1)}\sharp\rho_{0}$ involves pushing $x_0\sim\rho^{0}$ forward by a sequence of ICNNs $\psi^{(\cdot)}$ of length $k-1$, requiring $O\big((k-1)T\big)$ operations.
The forward pass to evaluate the JKO step objective $\widehat{\mathcal{L}}$ in Algorithm \ref{algorithm-main} requires $O(DT+D^{3})$ operations, as does the backward pass to compute the gradient $\frac{\partial \widehat{\mathcal{L}}}{\partial\theta}$ w.r.t. $\theta$.


The \textit{memory complexity} is more difficult to characterize, since it depends on the autodiff implementation. It does not exceed the time complexity and is linear in the number of JKO steps $k$.

\textbf{Wall-clock times.} All particle-based methods considered in \wasyparagraph\ref{sec-experiments} and $\lfloor$Dual JKO$\rceil$ require from several seconds to several minutes CPU computation time. Our method requires from several minutes to few hours on GPU, the time is explained by the necessity to train a new network at each step.

\textbf{Advantages.} Due to using continuous approximation, our method scales well to high dimensions, as we show in  \wasyparagraph\ref{sec-convergence-stationary} and \wasyparagraph\ref{sec-ou-process}. After training, we can produce infinitely many samples $x_{k}\sim \rho^{(k)}$, together with their trajectories $x_{k-1},x_{k-2},\dots,x_{0}$ along the gradient flow. Moreover, the densities of samples in the flow $\frac{d\rho^{(k)}}{dx}(x_{k}),\frac{d\rho^{(k-1)}}{dx}(x_{k-1}),\dots,\frac{d\rho^{(0)}}{dx}(x_{0})$ can be evaluated immediately.

In contrast, particle-based and domain discretization methods do not scale well with the dimension (Figure \ref{fig:ou-simulation}) and provide no density. Interestingly, despite its parametric approximation, $\lfloor$Dual JKO$\rceil$ performs comparably to particle simulation and worse than ours (see additionally \cite[Figure 3]{frogner2020approximate}).

\textbf{Limitations.} To train $k$ JKO steps, our method requires time proportional to $k^{2}$ due to the increased complexity of sampling $x\sim\rho^{(k)}$. This may be disadvantageous for training long diffusions. 
In addition, for very high dimensions $D$, exact evaluation of $\log\det\nabla^{2}\psi_{\theta}(x)$ is time-consuming.

\textbf{Future work.}
To reduce the computational complexity of sampling from $\rho^{(k)}$, at  step $k$ one may regress an invertible network ${H:\mathbb{R}^{D}\rightarrow \mathbb{R}^{D}}$ \cite{ardizzone2018analyzing,jacobsen2018revnet} to satisfy $H(x_0)\approx \nabla\psi^{(k)}\circ\dots\circ \nabla\psi^{(1)}(x_{0})$ and use $ H\sharp \rho_{0}\rightarrow\rho^{(k)}$ to simplify sampling. An alternative is to use variational inference \cite{blei2017variational,rezende2015variational,zhang2018advances} to approximate $\rho^{(k)}$. 
To mitigate the computational complexity of computing $\log\det\nabla\psi_{\theta}(x)$, fast approximation can be used \cite{ubaru2017fast, han2015large}. More broadly,  developing ICNNs with easily-computable exact Hessians is a critical avenue for further research as ICNNs continue to gain attention in machine learning \cite{makkuva2020optimal,korotin2021wasserstein,korotin2021continuous,huang2020convex,fan2020scalable,amos2017optnet}.




\textbf{Potential impact.} Diffusion processes appear in numerous scientific and industrial applications, including machine learning, finances, physics, and population dynamics. Our method will improve models in these areas, providing better scalability. Performance, however, might depend on the expressiveness of the ICNNs, pointing to theoretical convergence analysis as a key topic for future study to reinforce confidence in our model. 

In summary, we develop an efficient method to model diffusion processes arising in many practical tasks. We apply our method to common Bayesian tasks such as unnormalized posterior sampling (\wasyparagraph\ref{sec-unnormalized-posterior}) and nonlinear filtering (\wasyparagraph\ref{subsec:nonlin_filtering}).  Below we mention several other potential applications:

\begin{itemize}
    \item \textbf{Population dynamics.} In this task, one needs to recover the potential energy $\Phi(x)$ included in the Fokker-Planck free energy functional $\mathcal{F}_{\text{FP}}$ based on samples from the diffusion obtained at timesteps $t_{1},\dots, t_{n}$, see \cite{hashimoto2016learning}. This setting can be found in computational biology, see \wasyparagraph 6.3 of \cite{hashimoto2016learning}. A recent paper \cite{bunne2021jkonet} utilizes ICNN-powered JKO to model population dynamics.
    \item \textbf{Reinforcement learning}. Wasserstein gradient flows provide a theoretically-grounded way to optimize an agent policy in reinforcement learning, see \cite{richemond2017wasserstein,zhang2018policy}. The idea of the method is to maximize the expected total reward (see (10) in \cite{zhang2018policy}) using the gradient flow associated with the Fokker-Planck functional (see (12) in \cite{zhang2018policy}). The authors of the original paper proposed discrete particle approximation method to solve the underlying JKO scheme. Substituting their approach with our ICNN-based JKO can potentially improve the results.
    \item \textbf{Refining Generative Adversarial Networks}. In the GAN setting, given trained generator $G$ and discriminator $D$, one can improve the samples from $G$ by $D$ via considering a gradient flow w.r.t. entropy-regularized $f$-divergence between real and generated data distribution (see \cite{ansari2020refining}, in particular, formula (4) for reference). Using KL-divergence makes the gradient flow consistent with our method: the functional $\mathcal{F}$ defining the flow has only entropic and potential energy terms. The usage of our method instead of particle simulation may improve the generator model.
    \item \textbf{Molecular Discovery.} In \cite{alvarez2021optimizing}, in parallel to our work the JKO-ICNN scheme is proposed. The authors consider the molecular discovery as an application. The task is to increase the \textit{drug-likeness} of a given distribution $\rho$ of molecules while staying close to the original distribution $\rho_{0}$. The task reduces to optimizing the functional $\mathcal{F}(\rho)=\mathbb{E}_{x\sim \rho}\Phi(x)+\mathcal{D}(\rho,\rho_{0})$ for a certain potential $\Phi$ ($V$ - in the notation of \cite{alvarez2021optimizing}) and a discrepancy $\mathcal{D}$. The authors applied the JKO-ICNN method to minimize $\mathcal{F}$ on MOSES \cite{10.3389/fphar.2020.565644} molecular dataset and obtained promising results. 
\end{itemize}

\textbf{\textsc{Acknowledgements.}} The problem statement was developed in the framework of Skoltech-MIT NGP program. The Skoltech ADASE group acknowledges  the support of the Ministry of Science and Higher Education of the Russian Federation grant No. 075-10-2021-068. The MIT Geometric Data Processing group acknowledges the generous support of Army Research Office grants W911NF2010168 and W911NF2110293, of Air Force Office of Scientific Research award FA9550-19-1-031, of National Science Foundation grants IIS-1838071 and CHS-1955697, from the CSAIL Systems that Learn program, from the MIT--IBM Watson AI Laboratory, from the Toyota--CSAIL Joint Research Center, from a gift from Adobe Systems, from an MIT.nano Immersion Lab/NCSOFT Gaming Program seed grant, and from the Skoltech--MIT Next Generation Program.

\bibliographystyle{plain}
\bibliography{references}

\newpage
\appendix

\section{Experimental Details}
\label{sec-exp-details}

\textbf{General details.} We use DenseICNN architecture \cite[Appendix B.2]{korotin2021wasserstein} for $\psi_{\theta} $ with 2 hidden layers and vary the width of the model depending on the task. We use Adam optimizer with learning rate decreasing with the number of JKO steps. We initialize the ICNN models either via pretraining to satisfy $\grad \psi_{\theta}(x) \approx x$ or by using parameters $\theta$ obtained from the previous JKO step.

For $\lfloor$Dual JKO$\rceil$, we used the implementation provided by the authors with default hyper-parameters. For $\lfloor$EM PR$\rceil$ we implemented the Proximal Recursion operator following the pseudocode of \cite{CaluyaEMPR} and used the default hyper-parameters but we increased the number of particles for fair comparison with the vanilla $\lfloor$EM$\rceil$ algorithm. Note we limited the number of particles to $N = 10^4$ because of the high computational complexity of the method. For $\lfloor$SVGD$\rceil$, we used the official implementation available at
\begin{center}
\vspace{-1mm}\url{https://github.com/dilinwang820/Stein-Variational-Gradient-Descent}   
\end{center}
\vspace{-1mm}In particle-based simulations $\lfloor$EM$\rceil$, $\lfloor$BBF$\rceil$ and $\lfloor$EM PR$\rceil$ we used the particle propagation timestep $dt=10^{-3}$. 

We estimate the SymKL \eqref{sym_kl} using Monte Carlo (MC) on $10^{4}$ samples. In our method, MC estimate is straightforward since the method permits both sampling and computing the density. In particle-based methods, we use kernel density estimator to approximate the density utilizing \texttt{scipy} implementation of \texttt{gaussian\_kde} with \texttt{bandwidth} chosen by Scott's rule. In $\lfloor$Dual JKO$\rceil$, we employ importance sampling procedure and normalization constant estimation as detailed in \cite{frogner2020approximate}.

We set $\beta$ to be equal to $1$ throughout our experiments.

\subsection{Converging to Stationary Distribution}

\begin{wraptable}{r}{3.7cm}
\small
\vspace{-0.2in}
\begin{tabular}{llll}
\toprule
$D$ & $M$ & $l$ & $w$ \\
\midrule
2 & 5 & 10 & 256 \\
4 & 6 & 10 & 384 \\
6 & 7 & 10 & 512 \\
8 & 8 & 10 & 512 \\
10 & 9 & 10 & 512 \\
12 & 10 & 10 & 1024 \\
13 & 10 & 10 & 512 \\
32 & 10 & 6 & 1024 \\
\bottomrule
\end{tabular}
\caption{Hyper-parameters in the convergence exp.}
\vspace{-0.2in}
\label{table-conv-comp-params}
\end{wraptable}

As the stationary measure $\rho^{*}$ we consider random Gaussian mixture ${\frac{1}{N_p}\sum_{m = 1}^{M}\mathcal{N}(\mu_m, I_D)}$, where $\mu_1, \dots,\mu_{M}\sim \text{Uniform}\big([-\frac{l}{2},\frac{l}{2}]^{D}\big)$. We set the width $w$ of used ICNNs $\psi_{\theta}$ depending on dimension $D$. The parameters are summarized in Table \ref{table-conv-comp-params}.

Each JKO step uses $1000$ gradient descent iterations of Algorithm \ref{algorithm-main}. For dimensions $D = 2, 4, \dots, 12$ the first $20$ JKO transitions are optimized with $lr=5\cdot 10^{-3}$ and the remaining steps use $lr=2 \cdot 10^{-3}$. For qualitative experiments in $D = 13, 32$ we perform $50$ and $70$ JKO steps with step size $h = 0.1$. The learning rate setup in these cases is similar to quantitative experiment setting but has additional stage with $lr=5 \cdot 10^{-4}$ on the final JKO steps. The batch size is $N=512$.

\subsection{Modeling Ornshtein-Uhlenbeck Processes}

Matrices $A \in \mathbb{R}^{D \times D}$ are randomly generated using \texttt{sklearn.datasets.make\_spd\_matrix}. Vectors $b \in \mathbb{R}^{D}$ are sampled from standard Gaussian measure. All ICNNs $\psi_{\theta}$ have $w=64$ and we train each of them for $500$ iterations per JKO step with $lr=5\cdot 10^{-3}$ and batch size $N=1024$.

\subsection{Unnormalized Posterior Sampling}
\begin{wraptable}{r}{7.5cm}
\small
\vspace{-0.5in}
  \centering
  \begin{tabular}{llllll}
    \toprule
    Dataset & $w$ & $lr$ & $iter$ & batch & $K$ \\
    \midrule
    covtype & $512$ & $2\cdot 10^{-5}$ & $10^{4}$ & $1024$ & $6$ \\
    german & $512$ & $2\cdot 10^{-4}$ & $5000$ & $512$ & $5$  \\
    diabetis & $128$ & $5\cdot 10^{-5}$ & $6000$ & $1024$ & $16$ \\
    twonorm & $512$ & $5\cdot 10^{-5}$ & $5000$ & $1024$ & $7$ \\
    ringnorm & $512$ & $5\cdot 10^{-5}$ & $5000$ & $1024$ & $2$ \\
    banana & $128$ & $2\cdot 10^{-4}$ & $5000$ & $1024$ & $5$ \\
    splice & $512$ & $2\cdot 10^{-3}$ & $2000$ & $512$ & $5$ \\
    waveform & $512$ & $5\cdot 10^{-5}$ & $5000$ & $512$ & $2$ \\
    image & $512$ & $5\cdot 10^{-5}$ & $5000$ & $512$ & $5$ \\
    \hline
  \end{tabular}
\caption{Hyper-parameters we use in Bayesian logistic regression experiment.}
  \label{table-posterior-inference-params}
  \vspace{-1mm}
\end{wraptable} 
To remove positiveness constraint on $\alpha$ we consider $[w, \log(\alpha)]$ as the regression model parameters instead of $[w, \alpha]$.
To learn the posterior distribution $p(x|S_{\text{train}})$ we use JKO step size $h = 0.1$. Let $iter$ denote the number of gradient steps over $\theta$ per each JKO step.
The used hyper-parameters for each dataset are summarized in Table \ref{table-posterior-inference-params}.


To estimate the log-likelihood and accuracy of the predictive distribution on $S_{test}$ based on $p(x | S_{\text{train}})$, we use straightforward MC estimate on $2^{12}$ random parameter samples.


\section{Nonlinear Filtering Details}
\label{appendix:mcmc}





For $k=1,2,\dots$ we progressively obtain access to samples (and their un-normalized density) from predictive distribution $p_{t_{k},X}(x|Y_{1:k})$ for step $k$ given $k$ observations $Y_{1},\dots,Y_{k}$.

First, at each step $k$, we access $p_{t_{k},X}(x|Y_{1:k})$ through $p_{t_{k-1},X}(x|Y_{1:k-1})$. To do this, we use our Algorithm \ref{algorithm-main} to model a diffusion on $[t_{k-1},t_{k}]$ with initial distribution $p_{t_{k-1},X}(x|Y_{1:k-1})$. We perform $n_{k}$ JKO steps of size $h_{k}=\frac{t_{k}-t_{k-1}}{n_{k}}$ and obtain ICNNs $\psi^{(k)}_{1},\dots,\psi^{(k)}_{n_k}$ (approximately) satisfying
\begin{eqnarray}
    \mu_{p_{t_{k},X}(x|Y_{1:k-1})} = [\nabla\psi_{n_k}^{(k)}\circ\dots\circ \nabla\psi_{1}^{(k)}]\sharp \mu_{p_{t_{k-1},X}(x|Y_{1:k-1})}
    \label{prediction-nonlinear}
\end{eqnarray}
Here $\mu_{p(\cdot)}$ is the measure with density $p(\cdot)$. We define ${B_{k}\stackrel{def}{=}\nabla\psi_{n_k}^{(k)}\circ\dots\circ \nabla\psi_{1}^{(k)}}$.

Let $x_{k}\in\mathbb{R}^{D}$ and sequentially define $x_{i-1}=B_{i}^{-1}(x_{i})$ for $i=k,\dots,1$. We derive
\begin{eqnarray}\dashuline{p_{t_{k},X}(x_{k}|Y_{1:k})}\stackrel{\eqref{nonlin_filt_marg_post}}{\propto}
\nonumber
\\
p(Y_k|X_{t_k}=x_{k})\cdot p_{t_{k},X}(x_{k} | Y_{1:k-1})\stackrel{\eqref{prediction-nonlinear}}{=}
\nonumber
\\
p(Y_k|X_{t_k}=x_{k})\cdot [\det \nabla B_{k}(x_{k-1})]^{-1}\cdot \dashuline{p_{t_{k-1},X}(x_{k-1} | Y_{1:k-1})}
\stackrel{\eqref{nonlin_filt_marg_post}}{\propto}\nonumber
\\
\dots
\nonumber
\\
\prod_{i=1}^{k} p(Y_{i}|X_{t_{i}}=x_{i})\cdot [\prod_{i=1}^{k}\det \nabla B_{i}(x_{i-1})]^{-1}\cdot \dashuline{p_{t_0,X}(x_0)}
\label{prop-sequential}
\end{eqnarray}
where we substitute \eqref{nonlin_filt_marg_post} sequentially for $k,k-1,\dots,1$. As the result, from \eqref{prop-sequential} we obtain the unnormalized density of predictive distribution $p_{t_{k},X}(x_{k}|Y_{1:k})$. 
To sample from the predictive distribution (to train the next step $k+1$) we use Metropolis-Hastings algorithm \cite{robert1999metropolis}. For completeness, we recall the algorithm \ref{algorithm-MH} below. The algorithm builds a chain $x^{(1)},x^{(2)},\dots$ converging to the distribution given by unnormalized density $\pi(\cdot)$. As input, the algorithm also takes a family of proposal distributions $q_{x}(\cdot)$ for $x\in\mathbb{R}^{D}$. The value $\alpha(\cdot,\cdot)$ is called the acceptance probability.

\begin{algorithm}[H]
    {
        \SetAlgorithmName{Algorithm}{empty}{Empty}
        \SetKwInOut{Input}{Input}
        \SetKwInOut{Output}{Output}
        \Input{Unnormalized density $\pi(\cdot)$; family of proposal distributions $q_x(\cdot)$ ($x \in \mathbb{R}^D$}
        \Output{Sequence $x^{(1)},x^{(2)},x^{(3)},\dots$ of samples from  $\pi$}
        
        Select $x^{(0)} \in \bbR^D$
        
        \For{$j = 1, 2, \dots$}{
            Sample $y \sim q_{x^{(j - 1)}}$\;
            
            Compute $\alpha(x^{(j - 1)}, y) = \min{\left(1, \frac{\pi(y) q_y(x^{(j - 1)})}{\pi(x^{(j - 1)})q_{x^{(j - 1)}}(y)}\right)}$
            
            With probability $\alpha(x^{(j - 1)}, y)$ set $x^{(j)} \leftarrow y$; otherwise set $x^{(j)} \leftarrow x^{(j - 1)} $
        }
        
        \caption{Metropolis-Hastings algorithm}
        \label{algorithm-MH}
    }
\end{algorithm}

To sample from $p_{t_{k},X}(x_{k}|Y_{1:k})$ we use Algorithm \ref{algorithm-MH} with $\pi$ equal to unnormalized density \eqref{prop-sequential}. We note that computing $\pi(x_{k})$ for $x_{k}\in\mathbb{R}^{D}$ is not easy since it requires computing pre-images $x_{k-1},\dots,x_{0}$ by inverting $B_{k},B_{k-1},\dots,B_{1}$. As the consequence, this makes computation of acceptance probability $\alpha(\cdot,\cdot)$ hard. To resolve this issue,we choose special $x$-independent proposals
\begin{equation}
q=q_{x}\stackrel{\text{def}}{=}(B_{k}\circ B_{k-1}\circ\dots\circ B_{1})\sharp \mu_{p_{0,X}}.
\label{proposal-mcmc}
\end{equation}

In this case, all $\det$ terms in $\alpha(x,y)$ vanish simplifying the computation (we write $x=x_{k}$, $y=y_{k}$):

\begin{eqnarray}
 \frac{\pi(y) q_y(x)}{\pi(x)q_{x}(y)} = \frac{\pi(y) q(x)}{\pi(x)q(y)} =
 \nonumber
 \\
 \frac{
    p_{0, X}(y_0) \prod\limits_{i = 1}^{k} p_{t_{i}, Y}(Y_{i}|X_{t_{i}} = y_{i}) \prod\limits_{i = 1}^{k} \det \nabla B_{i}(x_{i - 1}) \cdot p_{0,X}(x_{0}) \prod\limits_{i = 1}^{k} \det \nabla B_i(y_{i - 1})
 }{
    p_{0, X}(x_0) \prod\limits_{i = 1}^{k} p_{t_{i}, Y}(Y_{i}|X_{t_{i}} = x_{i}) \prod\limits_{i = 1}^{k} \det  \nabla B_{i}(y_{i - 1}) \cdot p_{0,X}(y_{0}) \prod\limits_{i = 1}^{k} \det  \nabla B_i(x_{i - 1}) 
} =
 \nonumber
 \\
\frac{\prod\limits_{i = 1}^{k} p_{t_{i}, Y}(Y_{i}|X_{t_{i}} = y_{i})}{\prod\limits_{i = 1}^{k} p_{t_{i}, Y}(Y_{i}|X_{t_{i}} = x_{i})}
 \label{mcmc_acc_ratio}
\end{eqnarray}
To compute \eqref{mcmc_acc_ratio} one needs to know preimages $x_{k-1},\dots,x_{0}$ and $y_{k-1},\dots,y_{0}$ of points $y=y_{k}$ and $x=x_{k}$ respectively. They can be straightforwardly computed when sampling from $q$ happens \eqref{proposal-mcmc}.

\textbf{Experimental details.} To obtain the noise observations $Y_{k}=X_{t_{k}}+v_{k}$ from the process, we simulate a particle $X_0$ randomly sampled from the initial measure $\mathcal{N}(0, 1)$ by using Euler-Maruyama method to obtain the trajectory $X_{t}$. At observation times $t_{1}=0.5$, $\dots$, $t_{9}=4.5$ we add random noise $v_{k}\sim\mathcal{N}(0,1)$ to obtain observations $Y_{1},\dots,Y_{9}$.

We utilize Chang and Cooper \cite{CHANG19701} numerical integration method to compute true $p(X_{t_{\text{fin}}} | Y_{1:9})$. We construct regular fine grid on the segment $[-5, 5]$ with $2000$ points and numerically solve the SDE with timestep $dt = 10^{-3}$. At observation times $t_k$, $k \in 1, \dots 9$ we multiply the obtained probability density function $p_{t_k, X}(x | Y_{1:k - 1})$ by the density of the normal distribution $p(Y_k|X_{t_k} = x)$ estimated at the grid which results in unnormalized $p_{t_k, X}(x|Y_{1:k})$.  After normalization on the grid, $p_{t_k, X}(x|Y_{1:k})$ can be used in the new diffusion round on time interval $[t_k, t_{k + 1}]$. At final time $t_{\text{fin}}$ we estimate SymKL between the true distribution and ones obtained via other competitive methods by numerically integrating \eqref{sym_kl} on the grid. 

We implement $\lfloor$BBF$\rceil$ following the original article \cite{bayesian_bootstrap_filter}. Particle propagation performed via Euler-Maruyama method with timestep $dt = 10^{-3}$. The final distribution $p(X_{t_{\text{fin}}} | Y_{1:9})$ is estimated using kernel density estimator as described in Appendix \ref{sec-exp-details}.

For $\lfloor$Dual JKO$\rceil$ we use the code provided by the authors with the default hyper-parameters.

In our method, we use JKO step size $h = 0.1$ and model it by ICNN with width $w=256$. Each JKO step takes $700$ optimization iterations with $lr=5 \cdot 10^{-3}$ and batch size $N=1024$. At observation times $t_k$, $k \in 1, 2, \dots 9$ we use the Metropolis-Hastings algorithm \ref{algorithm-MH} with acceptance probability $\alpha$ calculated by \eqref{mcmc_acc_ratio}. Starting from the randomly sampled $x^{(1)}$ we skip the first $1000$ values of the Markov Chain generated by the algorithm which allows the series to converge to the distribution of interest $p_{t_k, X}(x|Y_{1:k})$. We take each second element from the chain in order to decorrelate the samples. To simultaneously sample the batch of size $N$, we run $N$ chains in parallel. To compute SymKL, we normalize the resulting distribution $p(X_{t_{\text{fin}}} | Y_{1:9})$ on the Chang-Cooper support grid.

\section{Additional Experiments}
\label{appendix:add_exp}

In Figure \ref{fig:ou-simulation-full}, we compare the true distribution $\rho_t$ with the predicted distribution via the competitive methods when modelling Ornstein-Uhlenbeck processes (\wasyparagraph\ref{sec-ou-process}). The comparison is given for time $t = 0.1, 0.2, \dots, 1.0$.

\begin{figure}[!h]
\centering
\begin{tabular}{cc}
  \begin{subfigure}[b]{0.37\textwidth}
         \centering
         \includegraphics[width=\linewidth]{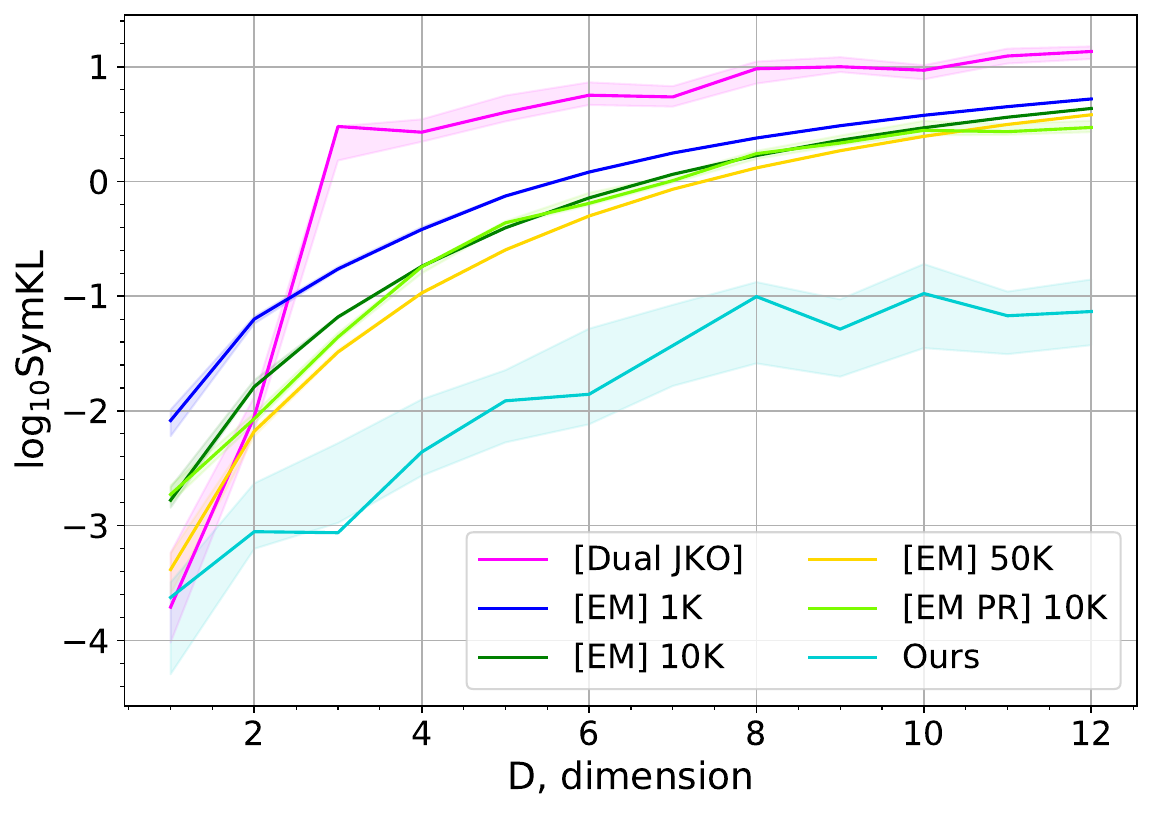}
         \caption{Time $t=0.1$}
     \end{subfigure} &   
     \begin{subfigure}[b]{0.37\textwidth}
        \centering
        \includegraphics[width=\linewidth]{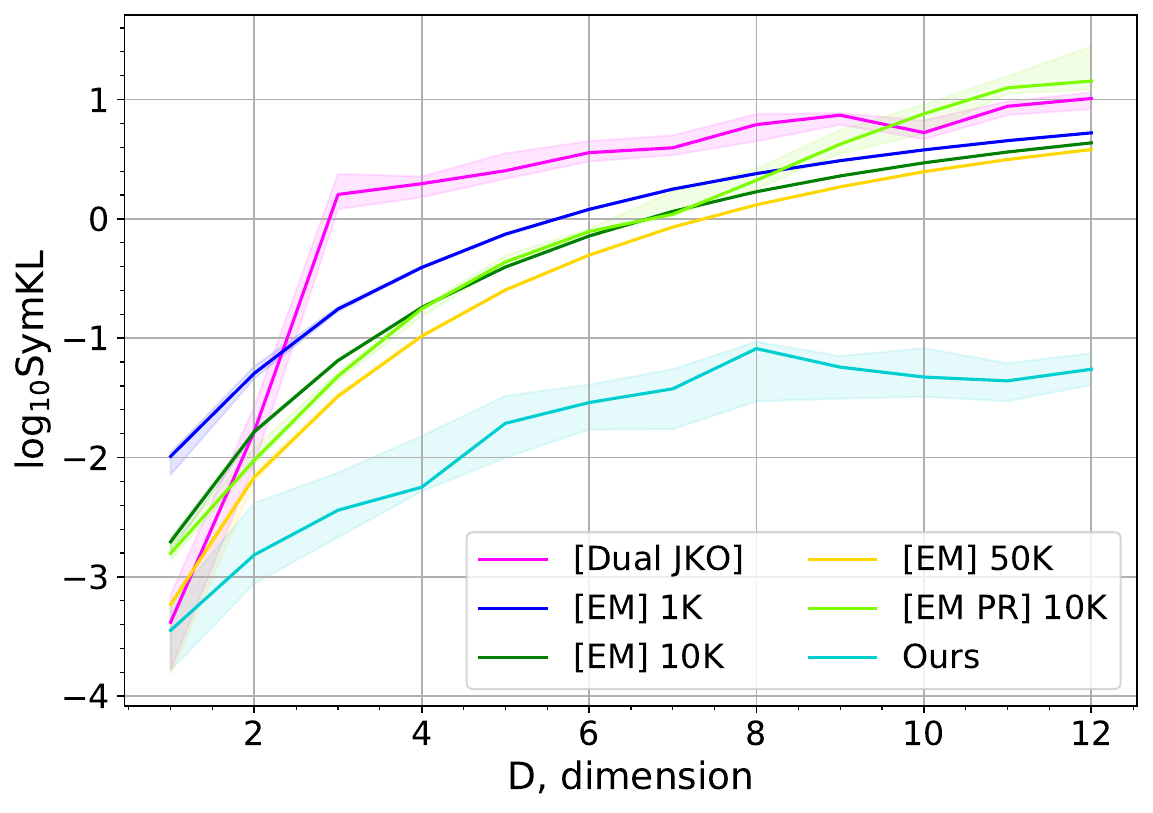}
         \caption{Time $t=0.2$}
     \end{subfigure}\\
    \begin{subfigure}[b]{0.37\textwidth}
         \centering
         \includegraphics[width=\linewidth]{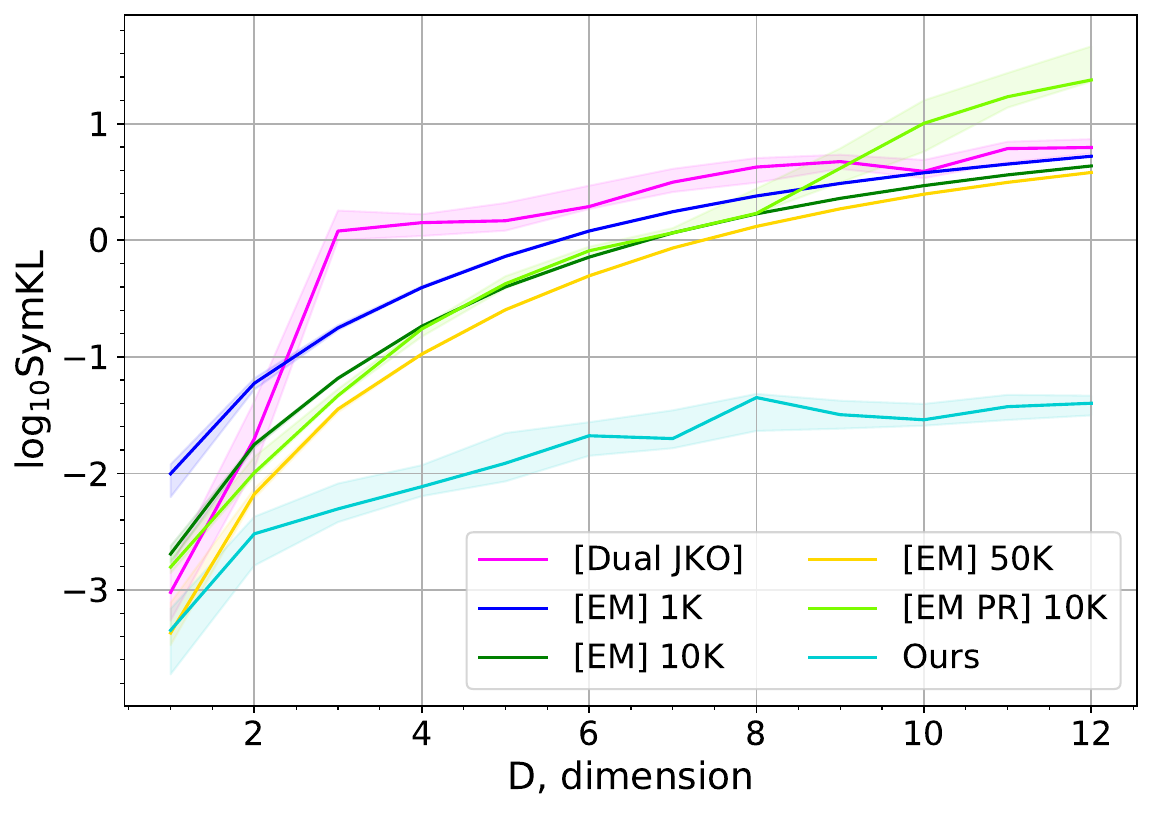}
         \caption{Time $t=0.3$}
     \end{subfigure} &   
     \begin{subfigure}[b]{0.37\textwidth}
        \centering
        \includegraphics[width=\linewidth]{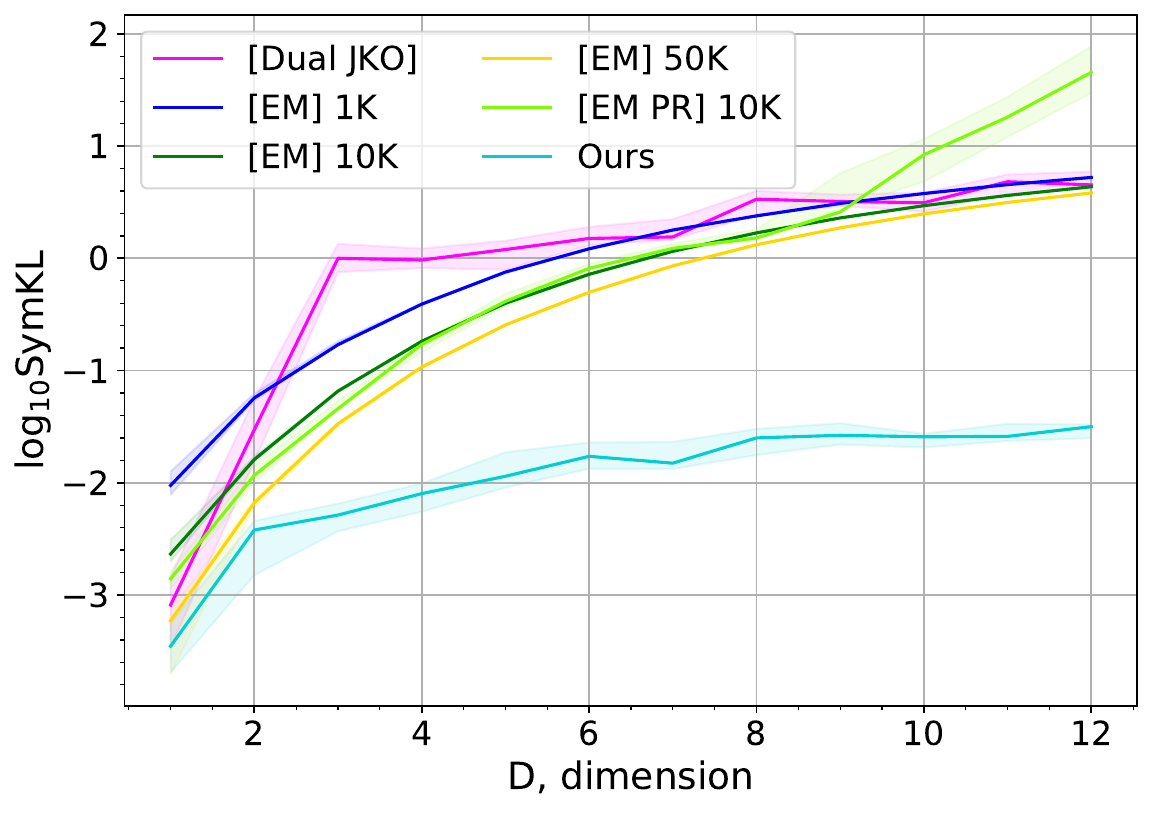}
         \caption{Time $t=0.4$}
     \end{subfigure}\\
     \begin{subfigure}[b]{0.37\textwidth}
         \centering
         \includegraphics[width=\linewidth]{ims/ou_sym_kl_0_5.pdf}
         \caption{Time $t=0.5$}
     \end{subfigure} &   
     \begin{subfigure}[b]{0.37\textwidth}
        \centering
        \includegraphics[width=\linewidth]{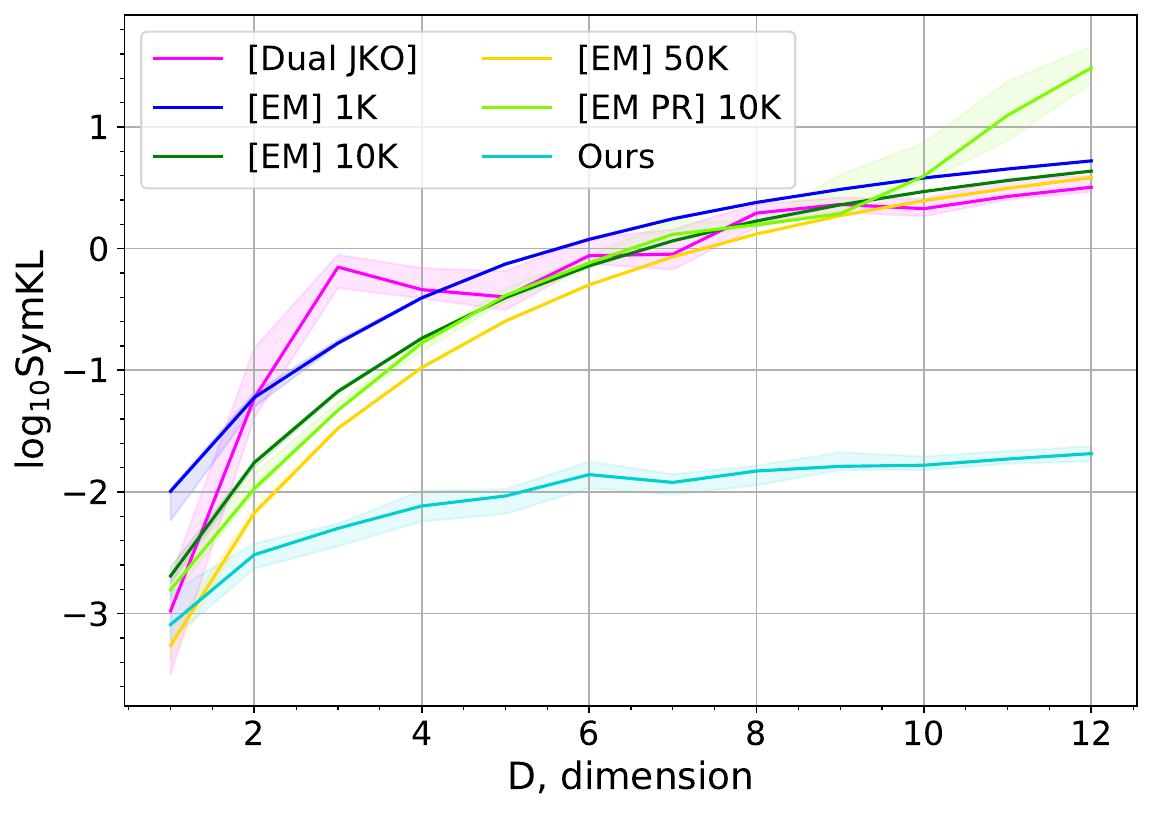}
         \caption{Time $t=0.6$}
     \end{subfigure}\\
     \begin{subfigure}[b]{0.37\textwidth}
         \centering
         \includegraphics[width=\linewidth]{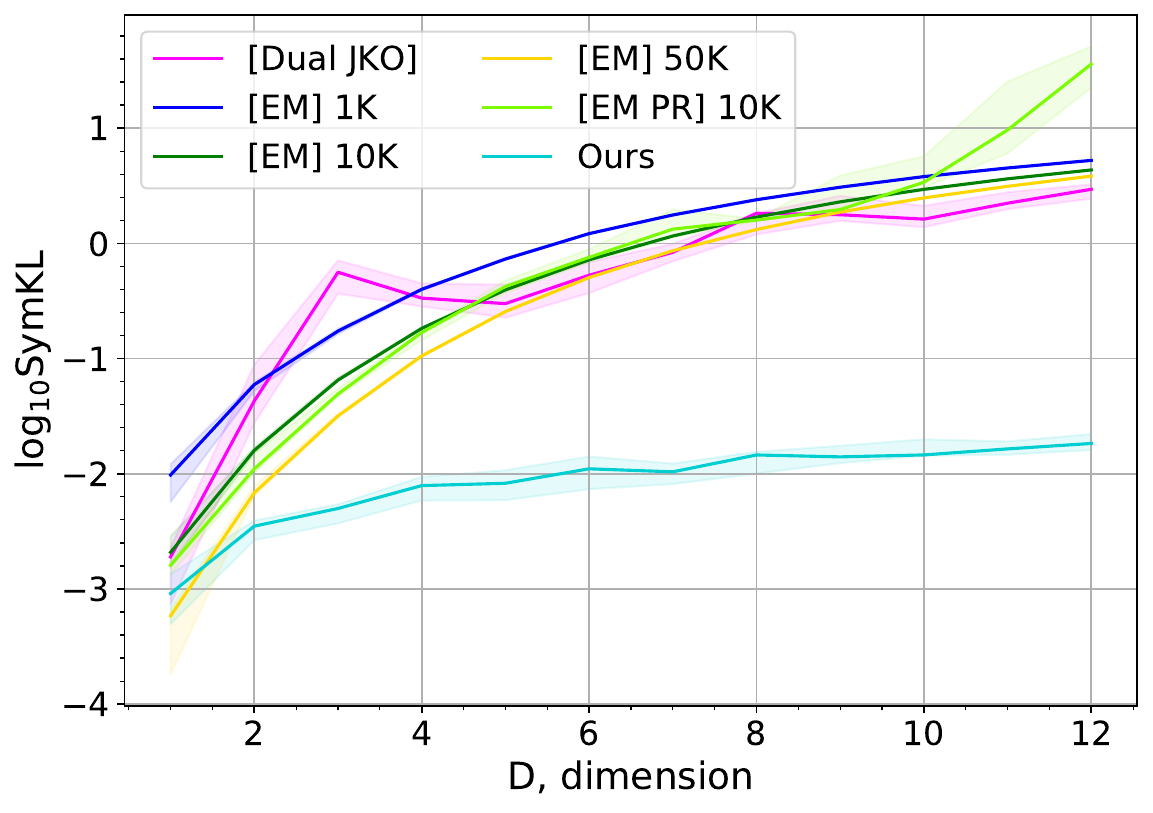}
         \caption{Time $t=0.7$}
     \end{subfigure} &   
     \begin{subfigure}[b]{0.37\textwidth}
        \centering
        \includegraphics[width=\linewidth]{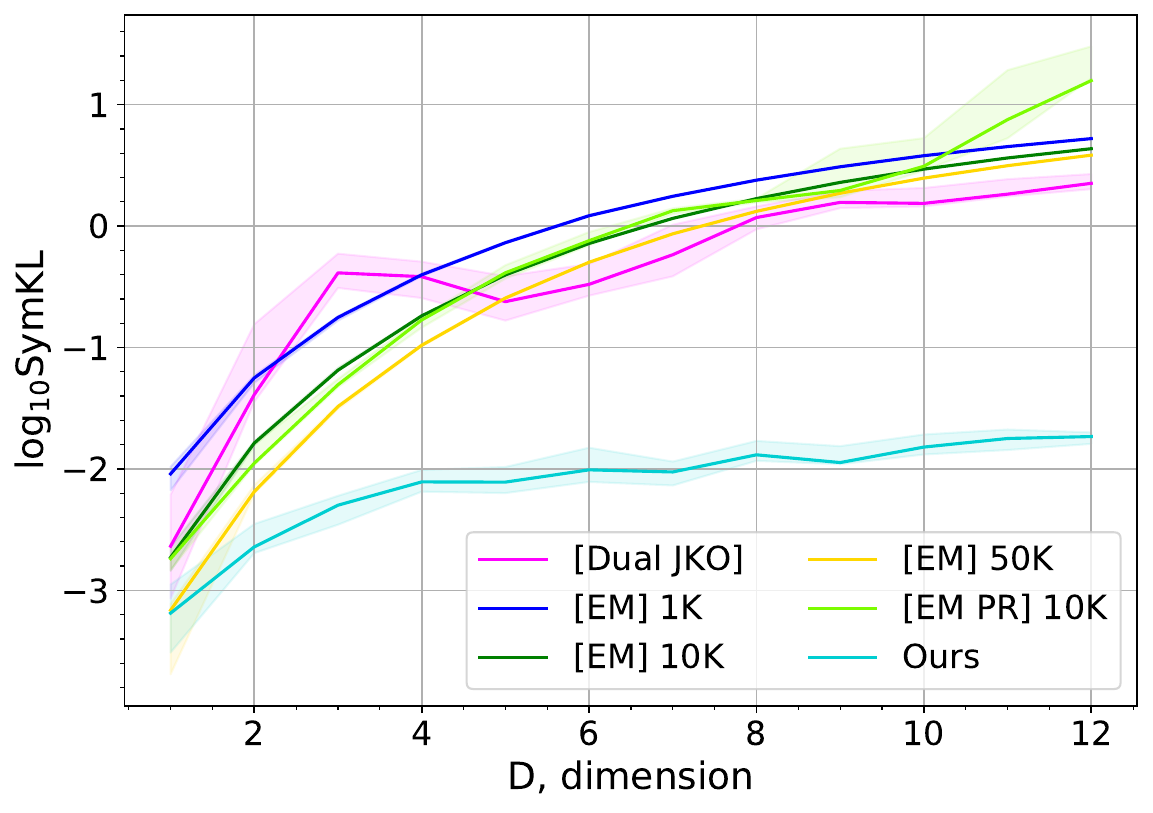}
         \caption{Time $t=0.8$}
     \end{subfigure}\\
     \begin{subfigure}[b]{0.37\textwidth}
         \centering
         \includegraphics[width=\linewidth]{ims/ou_sym_kl_0_9.pdf}
         \caption{Time $t=0.9$}
     \end{subfigure} &   
     \begin{subfigure}[b]{0.37\textwidth}
        \centering
        \includegraphics[width=\linewidth]{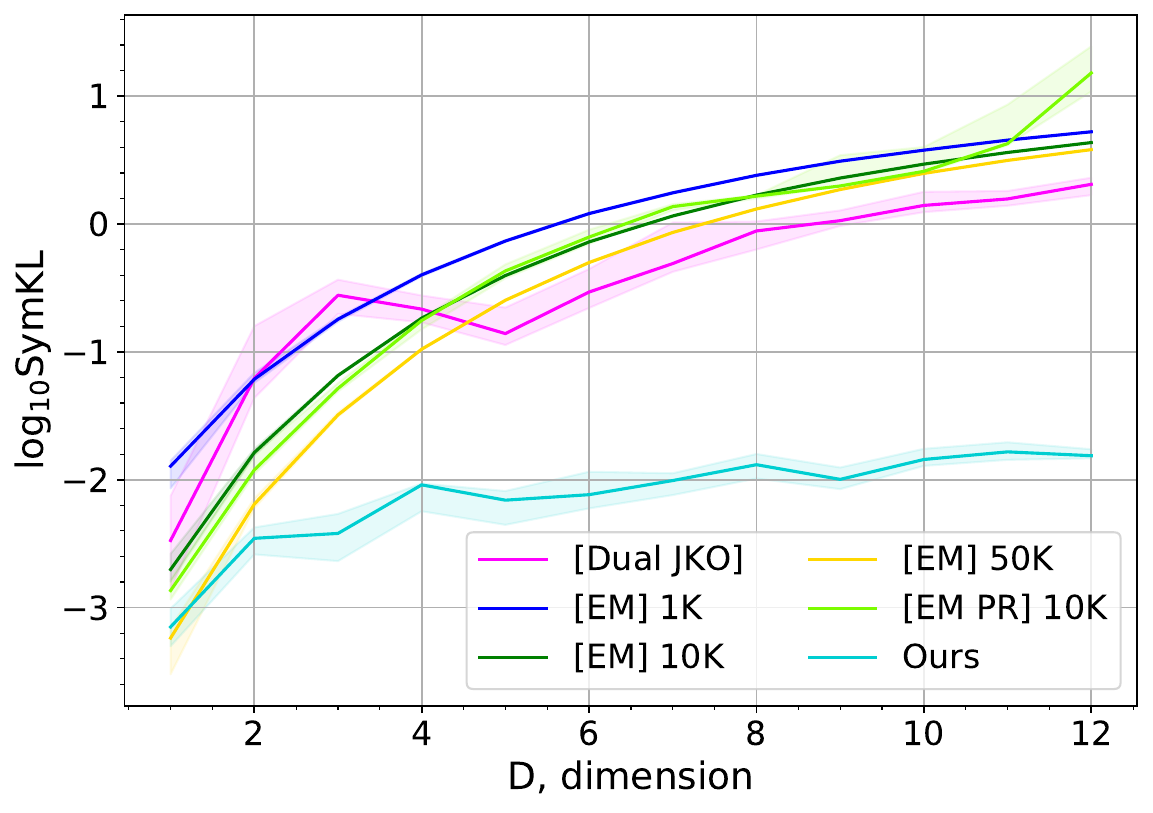}
         \caption{Time $t=1.0$}
     \end{subfigure}\\
\end{tabular}
\caption{SymKL values between the computed measures and the true measure at $t = 0.1, 0.2, \dots, 1$ in dimensions $D = 1, 2, \dots 12$. Best viewed in color.}
\label{fig:ou-simulation-full}
\end{figure}

\end{document}